\DeclareMathOperator*{\argmax}{arg\,max}
\newcommand{\IR}{\mathbb{R}}
\newcommand{\eg}{\emph{e.g.}}
\newcommand{\ie}{\emph{i.e.}}
\newtheorem{lemma}{Lemma}
\title{A Bayesian Choice Model\\for Eliminating Feedback Loops}
\author{
G\"{o}khan \c{C}apan
\\\texttt{gokhan.capan@boun.edu.tr}\And
 \.{I}lker G\"{u}ndo\u{g}du\\\texttt{ilker.gundogdu@boun.edu.tr}\And
 Ali Caner T\"{u}rkmen\\ \texttt{caner.turkmen@boun.edu.tr}\And
 \c{C}a\u{g}r\i\,  Sofuo\u{g}lu\\\texttt{cagri.sofuoglu@boun.edu.tr}\And
 Ali Taylan Cemgil\\ \texttt{taylan.cemgil@boun.edu.tr}\\
Department of Computer Engineering, Bo\u{g}azi\c{c}i University, Istanbul, Turkey
}
\begin{document}

\maketitle

\begin{abstract}
Self-reinforcing feedback loops in personalization systems are typically caused by users choosing from a limited set of alternatives presented systematically based on previous choices.
We propose a Bayesian choice model built on Luce axioms that explicitly accounts for users' limited exposure to alternatives.
Our model is fair---it does not impose negative bias towards unpresented alternatives, and practical---preference estimates are accurately inferred upon observing a small number of interactions.
It also allows efficient sampling, leading to a straightforward online presentation mechanism based on Thompson sampling. 
Our approach achieves low regret in learning to present upon exploration of only a small fraction of possible presentations.
The proposed structure can be reused as a building block in interactive systems, e.g., recommender systems, free of feedback loops.
\end{abstract}

\section{Introduction}
In most modern digital applications, personalized user experience is shaped interactively. 
Users choose from a subset of available alternatives---\eg, products, movies, and news articles---selected by a personalization system, \eg, a recommender system, in line with their {\em preferences}.
The system, in turn, estimates the user's preferences based on her previous {\em discrete choices} among systematically presented alternatives.
A key source of bias is ignoring the fact that a choice is made from a cherry-picked and limited subset of options, leading to a self-reinforcing feedback loop.
That is, options that were never presented are unfairly penalized.

Preference estimates obtained from such systems are biased \citep{liangcausal} and inconsistent \citep{interactionrecsys}. 
Moreover, the user's interest may degenerate over time due to systematic exposure, leading to {\em echo chambers} \citep{degenerate}. 
The interplay of the user's choice and the system's presentation in a feedback loop, reinforcing the system's own biased belief, results in the so-called {\em filter bubble}---an unintentional form of censorship with unexpected economic and societal impact \citep{pariserbubble}.

We pose two questions towards tackling the feedback loop.
First, how does one model user preferences accounting for the bias introduced by systematic and limited presentations, ensuring all alternatives are treated fairly? 
Second, how does a system, built on such a model and aware of its limitations, learn to present the best subset of alternatives? Our contribution in this work attempts to answer both questions.

\begin{itemize}
	\item We introduce a Bayesian {\em choice model}, the Dirichlet-Luce model, that accounts for limited exposure to alternatives and conforms to Luce's choice axiom \citep{choice}.
	We study our model's properties, as well as methods for estimation and full Bayesian inference. 
	Dirichlet-Luce posterior inference achieves pairwise {\em preference aggregation} upon collecting statistics for $O(K \log K)$ unique pairs.
	Most importantly, our model ensures {\em independence of unexplored alternatives}---marginal posterior probabilities of choosing options that were never presented are independent of other choices. 
	That is, the model is provably {\em fair} to options that were underrepresented in previous presentations, or newly added (\ie, cold-started) to the set of options.

	\item We propose a mechanism for {\em learning to present} the best subset of alternatives, casting the Dirichlet-Luce model as the central component of a bandit algorithm. 
	Particularly, we develop a sequential Monte Carlo algorithm to sample a presentation from the Dirichlet-Luce posterior.
	Our approach, an instance of {\em Thompson sampling} \citep{thompson33}, naturally composes presentations with items that were either frequently preferred or scarcely presented.
	Our algorithm achieves lower regret than state-of-the-art bandit algorithms in pairwise \citep{doublets} and $L$-wise \citep{toprank} preference scenarios. 
\end{itemize}

Overall, we provide a practical model, inference framework, and presentation mechanism that can deal with the inherent self-reinforcing feedback loop present in many interactive systems. 
Our model can be reused as a key building block for personalization and recommender systems.

We present the problem setup in Section~\ref{sec:setup} and introduce the Dirichlet-Luce model in Section~\ref{sec:dirichlet_luce}.
Based on the model, we introduce our bandit algorithm in Section~\ref{sec:presentation}, discuss related work in Section~\ref{sec:related}, and provide a set of empirical results on interaction simulations in Section~\ref{sec:simulations}. 
We conclude the paper in Section~\ref{sec:discussion}.
\section{Problem Setup}\label{sec:setup}
{\bf Choice Model}
A discrete choice model specifies the probability of a user choosing an option among $K$ discrete alternatives (or, {\em options}).
Suppose there is a total of $K$ options, but $K$ is large and the user can only be exposed to a limited number of options to choose from. 
That is, the set of {\em choices} $\{k_t\}_{t=1}^T$ are made from systematically selected subsets of all alternatives---{\em presentations} $\{C_t\}_{t=1}^{T}$. 
Here, $k_t \in [K]\coloneqq \{1, 2, \cdots, K\}$, and $C_t \in \mathcal{C}$, where $\mathcal{C}$ denotes the set of all non-empty subsets of $[K]$.
Naturally, $k_t \in C_t, \forall t$.

Here, we study the probability $p(k | C)$ of choosing an item $k$ from a presentation $C$.
We assume a vector of {\em preferences} $\theta_k$ that specifies the probability of choosing an option above all other options, \ie, $\theta_k = p(k | [K])$.
In the first part of our contribution, we study a Bayesian choice model where each choice is multinomial restricted to a presentation.
Our choice model conforms to the Luce choice axiom as it satisfies {\em independence of irrelevant alternatives}---choices are probabilistic, and the probability of choosing an option over another is independent of other items in (or, absent from) the presentation.
Given a presentation $C$, it assigns choice probability to an item $k \in C$ in proportion to other items $C$, as first described by \citet{bradley} for the case of pairwise preferences, and generalized by \citet{choice} and \citet{permutations} for presentations comprising $L>2$ options, \ie, $p(k|C) = \theta_k / \left(\sum_{\kappa \in C} \theta_\kappa \right)$.

{\bf Online Learning for Subset Selection}
Armed with a model of choice behavior, an obvious next step is to design a mechanism for selecting presentations (subsets) $C \subset [K]$. 
Here, the goal is to make a presentation of size $L<K$ with low regret, measured in terms of the attractiveness (to the user) of the options included in the presentation. 
Over the course of interactions, such a mechanism ``explores'' the alternatives to be included in a presentation and learns to make optimal presentations. 

Such sequential decision making problems are widely framed as {\em bandit problems} \citep{banditbook}. 
In contrast to the standard multi-armed bandits setup, our goal is to come up with a system that does not merely pick a ``basic'' arm, but a subset of dependent arms.
Furthermore, the learner does not receive feedback for all arms presented, but only observes a {\it winner} (preference over others) \citep{complexonline}. 
This bandit setup has been referred to as {\em dueling bandits} (for $L=2$) \citep{dueling}, and {\em battling bandits} ($L>2$) \citep{battle}.	
\section{Dirichlet-Luce Model}\label{sec:dirichlet_luce}
Our first step is to write a choice model that fully specifies the probability of an option being chosen given a presentation $C$.
First, we take a {\em restricted multinomial} likelihood conditioned on the presentations $C_{1:T}$, and an underlying preference vector $\theta \in \Delta$ where  $\Delta$ denotes the $(K-1)$-probability simplex. 
Namely, we study the likelihood,
\begin{equation}\label{eq:likelihood2}
    p(k_{1:T}\mid \theta, C_{1:T}) 
        = \frac{\prod_k {\theta_{k}}^{y_k}}
                {\prod_{C \in \mathcal{C}} {(\sum_{\kappa\in C}\theta_{\kappa}})^{\mu(C)}},
\end{equation}
where $\mu$ and $y_k$ are statistics defined as
\begin{equation*}
\mu(C) = \sum_{t=1}^T\left[C_t = C\right], \qquad
y_k = \sum_{t=1}^T\left[k_t = k\right].
\end{equation*}
That is, $\mu(C)$ is the multiplicity of a presentation $C$, and $y_k$ the number of times an option $k$ is chosen. 
Note that these quantities are related, both are marginals of $\nu(k,C) = \sum_{t=1}^T \left[ C_t = C \right] \left[ k_t = k \right]$ that is defined to be the number of times option $k$ was chosen when the presentation was $C$: $\mu(C) = \sum_{k} \nu(k,C) $ and $y_k = \sum_{C \in \mathcal{C}} \nu(k,C)$.

We can model the cases of (i) opting not to choose by a dummy option that is always an element of any presentation, and (ii) making multiple choices by repeating the presentation as many times as a choice was made. Therefore, we will not deal with these cases explicitly.

The probability mass function of (\ref{eq:likelihood2}) admits the Dirichlet conjugate prior
 $p(\theta\mid \alpha) = \mathcal{D}(\theta; \alpha) = \Gamma\left(\sum_k \alpha_k\right) \prod_k \Gamma(\alpha_k)^{-1} \prod_k \theta_k^{\alpha_k - 1},$
where $\alpha = \alpha_{1:K}$ is the vector of concentration parameters. 
Upon closer inspection, we observe that a wider family of distributions satisfies conjugacy to the likelihood (\ref{eq:likelihood2}).
The likelihood suggests a generalized family of Dirichlet distributions,
\begin{equation}\label{eq:prior}
p(\theta \mid \alpha, \beta) \propto \prod_k \theta_k^{\alpha_k-1} \prod_{C\in \mathcal{C}} \left(\sum_{\kappa \in C}\theta_\kappa\right)^{-\beta(C)}.
\end{equation}
The parameters $\beta(C), \alpha_k$ denote the pseudo-counts of presentations and choices respectively.
For consistency, we also require that $\sum_C \beta(C) = \sum_k \alpha_k$, \ie, prior parameters lend themselves to interpretation as the marginals of a pseudo-contingency table.
It is easy to see, when $\beta = \beta_0$ where $\beta_0(C) = [C = [K]] \sum_{k} \alpha_k$,
$p(\theta \mid \alpha, \beta)$ reduces to the Dirichlet distribution as $p(\theta \mid \alpha, \beta_0) = \mathcal{D}(\theta; \alpha)$.

Completing our specification of the {\bf Dirichlet-Luce model}, the posterior distribution of preferences follow 
\begin{equation}\label{eq:dir_luce_posterior}
p(\theta\mid k_{1:T}, C_{1:T}, \alpha, \beta) \propto \prod_k \theta_k^{\alpha_k+y_k-1} \prod_{C\in \mathcal{C}}\left(\sum_{\kappa \in C} \theta_\kappa\right)^{-\mu(C) - \beta(C)}.
\end{equation}

One can adjust $\alpha$ to place prior belief on user preferences analogously to a Dirichlet prior, setting $\beta([K]) = \sum_k \alpha_k$. 
Prior beliefs on presentations can be embedded by adjusting $\beta$.
If $C_t = [K] \, \forall t$, the model reduces to the Dirichlet-Multinomial.

A special case of the posterior form (\ref{eq:dir_luce_posterior}), for pairwise preferences, was first introduced by \citet{bayesianbt73}. 
Although an inference procedure was not described, the normalizing constant was recognized in a series form later referred to as a ``very complicated function of factorials'' by \citet{bayesianbt77}. 
In fact, the complicated part of the normalizing constant is a special hypergeometric function, Carlson's $\mathcal{R}$ function \citep{rfunction}, and exact inference is intractable in general.
In the rest of this work, we rely on Monte Carlo methods for approximate posterior inference.

A general form of the proposed density was studied by \citet{dickeyhypergeometric} following Carlson's work.
\citet{dickeycensored} utilized this generalization for Bayesian analysis of multinomial cell probabilities under censored observations, a problem which can be thought of as the ``inverse'' of ours.
Our model also appears as a special case of the Hyperdirichlet distribution studied by \citet{hyperdirichlet}, where we additionally require consistency in parameters, \ie, $\sum_k \alpha_k + y_k = \sum_C \beta(C) + \mu(C)$.
We study the Dirichlet-Luce model's properties, estimation and inference methods in the supplementary material, taking a closer look at deterministic and Monte Carlo-based approximate inference. 

{\bf Preference Learning} 
Computing posterior predictive choice probabilities under the Dirichlet-Luce model requires approximate inference. 
In particular, the predicted preference probability of an option $k$ given presentation $C_{T+1}$ is equal to the expectation of preference ratios under the posterior
\begin{align*}\label{eq:pref2}
&p(k_{T+1}=k\mid C_{T+1},  k_{1:T}, C_{1:T}, \alpha, \beta) =\mathbb{E}_{p(\theta \mid k_{1:T}, C_{1:T})} \left[\frac{\theta_{k}}{\sum_{\kappa\in C_{T+1}} \theta_\kappa}\right].
\end{align*}

Under mild conditions, the posterior is log-concave with respect to log-preferences.
The gradient and Hessian of the posterior can be computed in time linear in the number of observations, making gradient-based maximum a posteriori (MAP) estimation scalable and easy to implement.
Furthermore, MAP estimates converge to the true latent preferences fairly quickly, as demonstrated in the supplementary material.

Upon first inspection of the posterior (\ref{eq:dir_luce_posterior}), one of the first concerns is the dimensionality of the sufficient statistic $\mu$. 
The model informs its knowledge of the preferences $\theta$ via statistics collected on a set that scales combinatorially.
This raises a key question on the size of the sample required for an accurate preference estimate.

The key is to note that taking a single preference vector leads to the implicit assumption that preferences are (stochastically) transitive---options admit a total ordering in their probability of being chosen against all others.
By analogy, a sorting algorithm---relying on transitivity---would find the ordering of $K$ options with $O(K\log K)$ pairwise comparisons.
Similarly, in the field of active learning, stochastic ranking from pairwise preferences has been widely explored \citep{maxingrankingactive, duelingranking, rankingactive, busapreference, busatopk, mohajeractivetopk}.
Here too, the object is to attain a probably approximately correct ranking of preferences within low sample complexity. 
Analogously to our case, the algorithm receives a stochastic comparison (random preference) feedback instead of a deterministic comparison result. 

\begin{wrapfigure}{r}{0.55\textwidth}
\centering
\vspace{0pt}
\includegraphics[width=\linewidth]{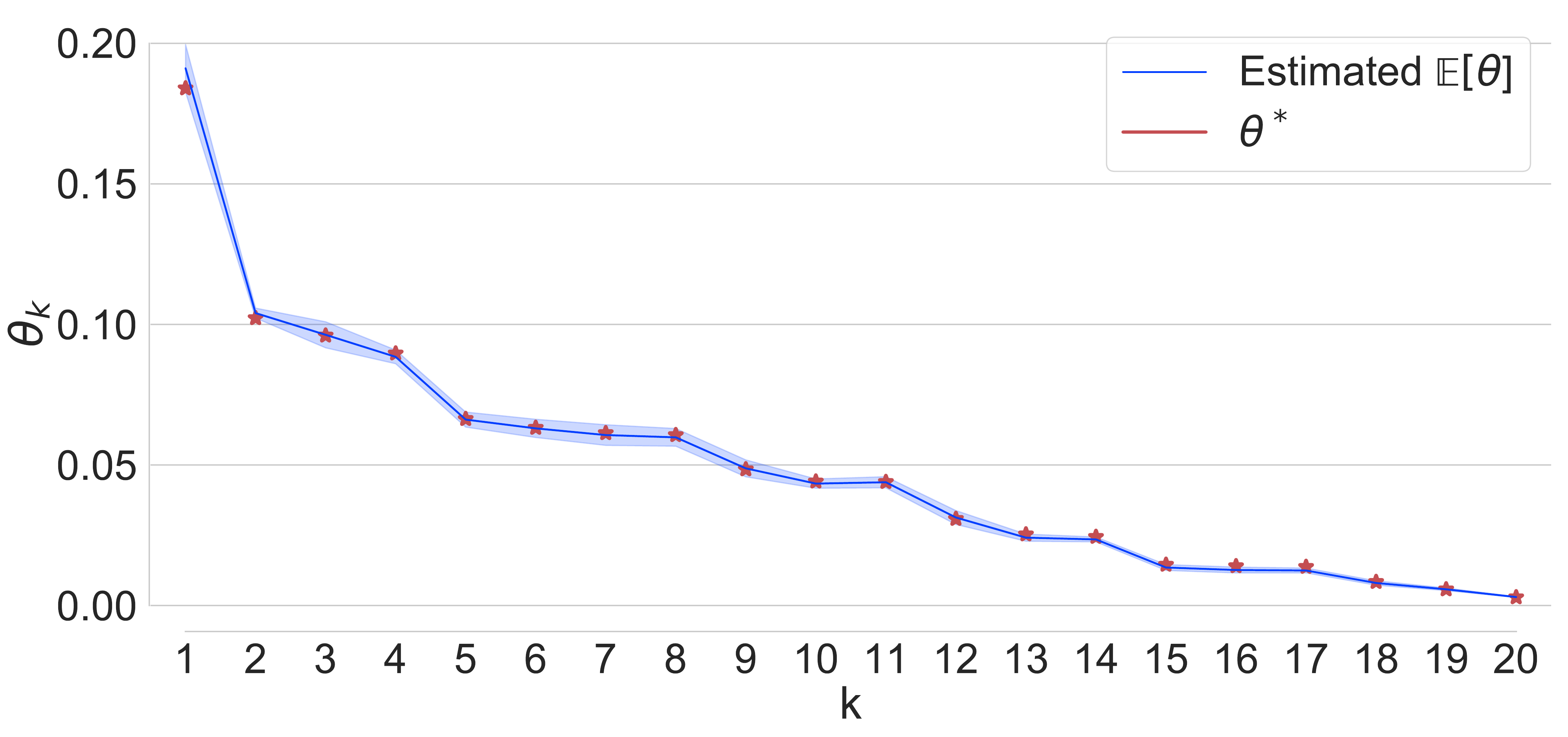}
\caption{Estimated (from 1000 samples) posterior mean (over 50 runs) conditioned on an actively selected data set (by Merge-Rank with bias and confidence parameters $\epsilon=0.05$ and $\delta=0.1$) of presentations due to transitivity and with simulated choices. $\theta^*$ is ordered, and $\mathbb{E}[\theta]$ estimates are conformably permuted for visualization. Shaded region denotes the standard deviation.}
\vspace{10pt}
\label{fig:ranking1}
\end{wrapfigure}

In this light, we explore whether the Dirichlet-Luce construction is able to recover a good representation of preferences, given a number of samples on the same order as a stochastic ranking algorithm.
We take the \textit{Merge-Rank} algorithm \citep{maxingrankingactive}, a stochastic variant of the {\it mergesort} algorithm for active subset selection---selecting pairs of options to ask for a preference feedback from the environment.
We fix a preference vector $\theta^*$ and run the Merge-Rank algorithm, generating a set of pairwise comparisons $C_{1:T}$ (corresponding to our presentations) and stochastic feedback $k_{1:T}$ (corresponding to choices).
In Figure~\ref{fig:ranking1}, we find that $\theta^*$ can be recovered accurately based on the same number of samples required by a stochastic ranker.
As in mergesort, the number of unique presentations is $O(K\log K)$.

{\bf Conflicting Choices} 
Our previous argument highlights the key ingredient that our model relies on to recover preferences---stochastic transitivity.
This observation raises a natural question, what if transitivity does not hold?
That is, how does the model treat conflicting (cyclical) choice behavior?

It is not hard to see that the likelihood is invariant for all realizations with identical $\mu$ and $y$. 
That is, the posterior ignores the associations between choices and particular presentations.
Conflicting choices are treated as {\it draws} by the posterior, as illustrated in Figure~\ref{fig:cyclic}.
These observations suggest a natural next step in utilizing our model. 
Cyclical choice behavior can be modeled as a mixture of preferences $\theta$.
Then, a mixture of Dirichlet-Luce densities can be used to capture different {\em modalities} of transitive preferences under limited subsets of alternatives, modeling complex choice behavior. There is also further evidence about the plausibility of such an approach in the psychology literature \citep{intransitive, transitivity}.
While we focus on modeling a single, unimodal preference behavior in this paper, this direction remains an exciting opportunity for further work.

\begin{figure}
\centering
\begin{subfigure}{0.49\textwidth}
	\includegraphics[width=\linewidth]{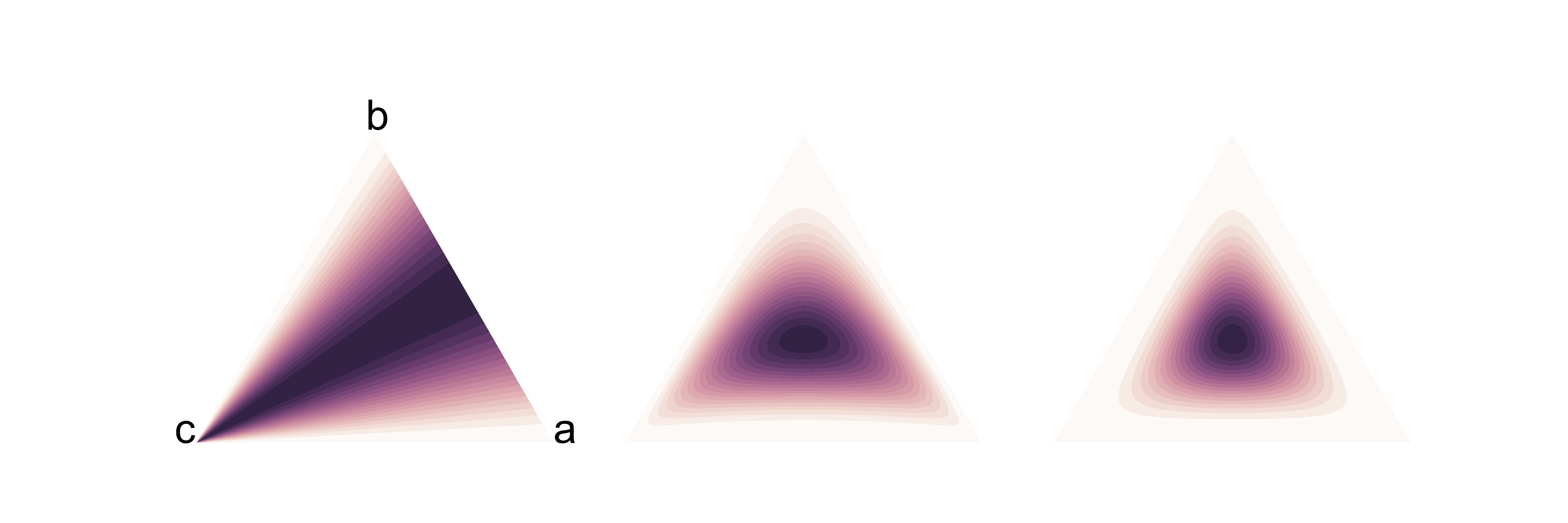}
	\caption{$(a<b)(a>b)(b<c)(b>c)(c<a)(c>a)$}
\end{subfigure}
\begin{subfigure}{0.49\textwidth}
	\includegraphics[width=\linewidth]{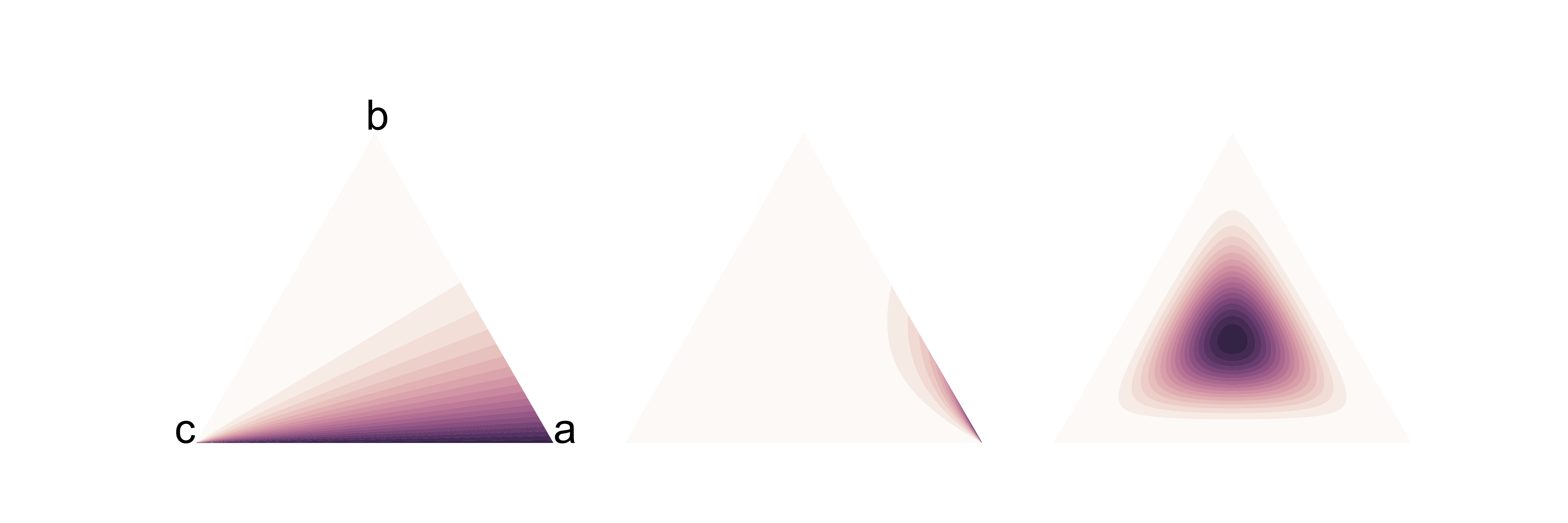}
	\caption{$(a>b)(a>b)(b>c)(b>c)(c>a)(c>a)$}
\end{subfigure}
\caption{Evolution of the exact posterior for $T=6$ observations. {\it Cyclic preferences} (b) are treated as {\it draws} (a).}
\label{fig:cyclic}
\end{figure}

{\bf Independence of Unexplored Options}
We now turn to the main benefit of our construction, its ability to keep invariant preference estimates of options that were never presented.
Particularly, the posterior density leads to ``fair'' preference estimates by keeping posterior marginals of $\theta_k$, where $k$ is an unexplored (never before presented) option, invariant independently of other choices. 
More formally, assume w.l.o.g. that option $k = 1$ is never presented, \ie, $\mu(C) = 0, \forall C \ni 1$.
It then follows, $p(\theta_1\mid \alpha, \beta_0, k_{1:T}, C_{1:T}) = p(\theta_1\mid \alpha, \beta_0)$.
This is in stark contrast to the Dirichlet-Multinomial model, in which choice observations impose negative bias on the marginals of all $\theta_k$, regardless of if they were ever presented. 
We give an illustration with $K=3$ options (Figure~\ref{fig:bubble}), and a proof in the supplementary material.
This result extends trivially to {\it groups} of options.

{\bf Fair Cold-Start}
Let us highlight one important implication of our previous remark. 
Under the Dirichlet-Luce posterior, alternatives that are newly introduced to the system benefit from the same invariance, or ``fairness,'' that other unpresented options enjoy. 
This property emerges as a natural way to ensure consistent inference of preferences for {\em cold-started} items. 

\begin{figure*}[t]
	\begin{subfigure}{0.33\textwidth}
		\includegraphics[width=\linewidth]{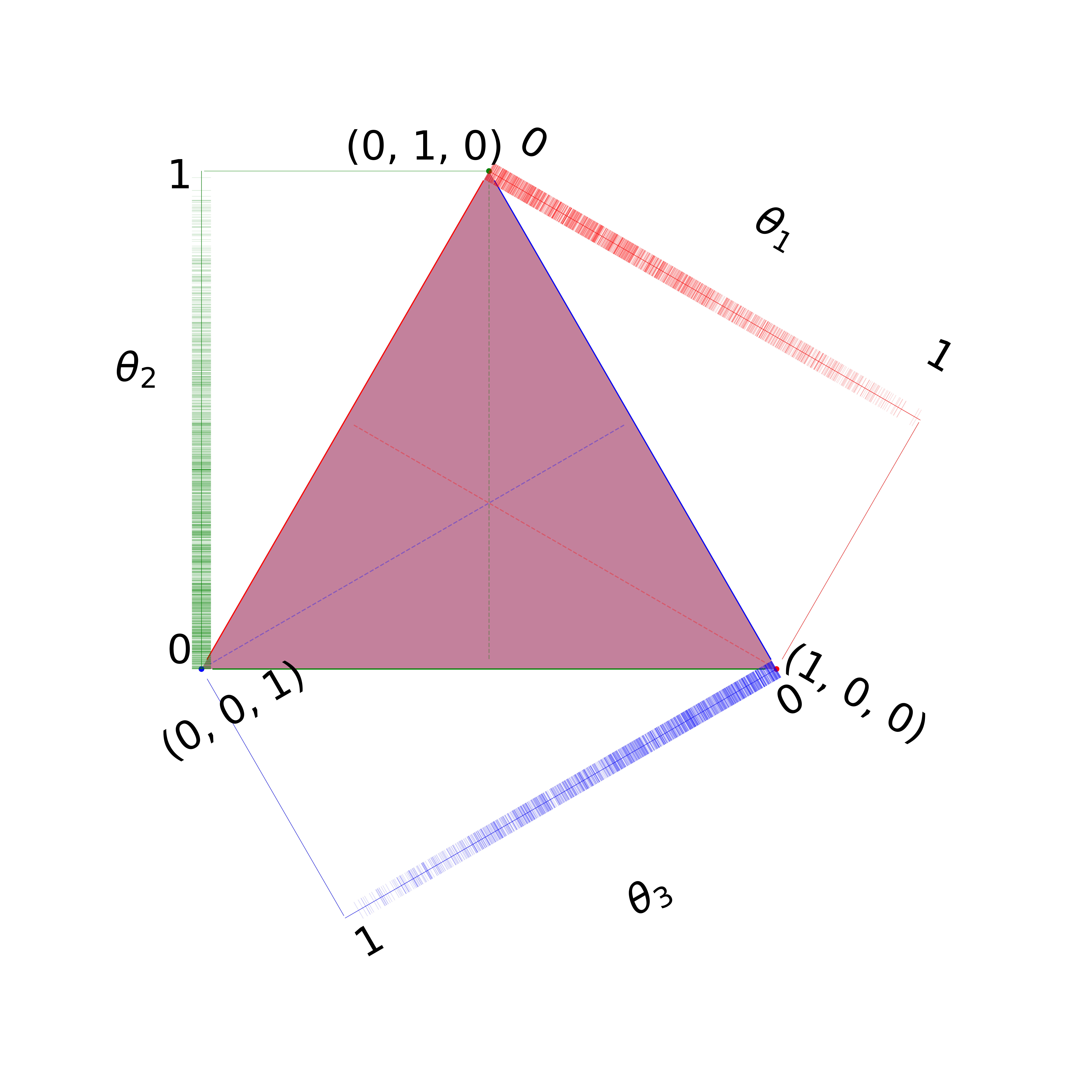}
		\caption{Prior}
	\end{subfigure}
	\begin{subfigure}{0.33\textwidth}
		\includegraphics[width=\linewidth]{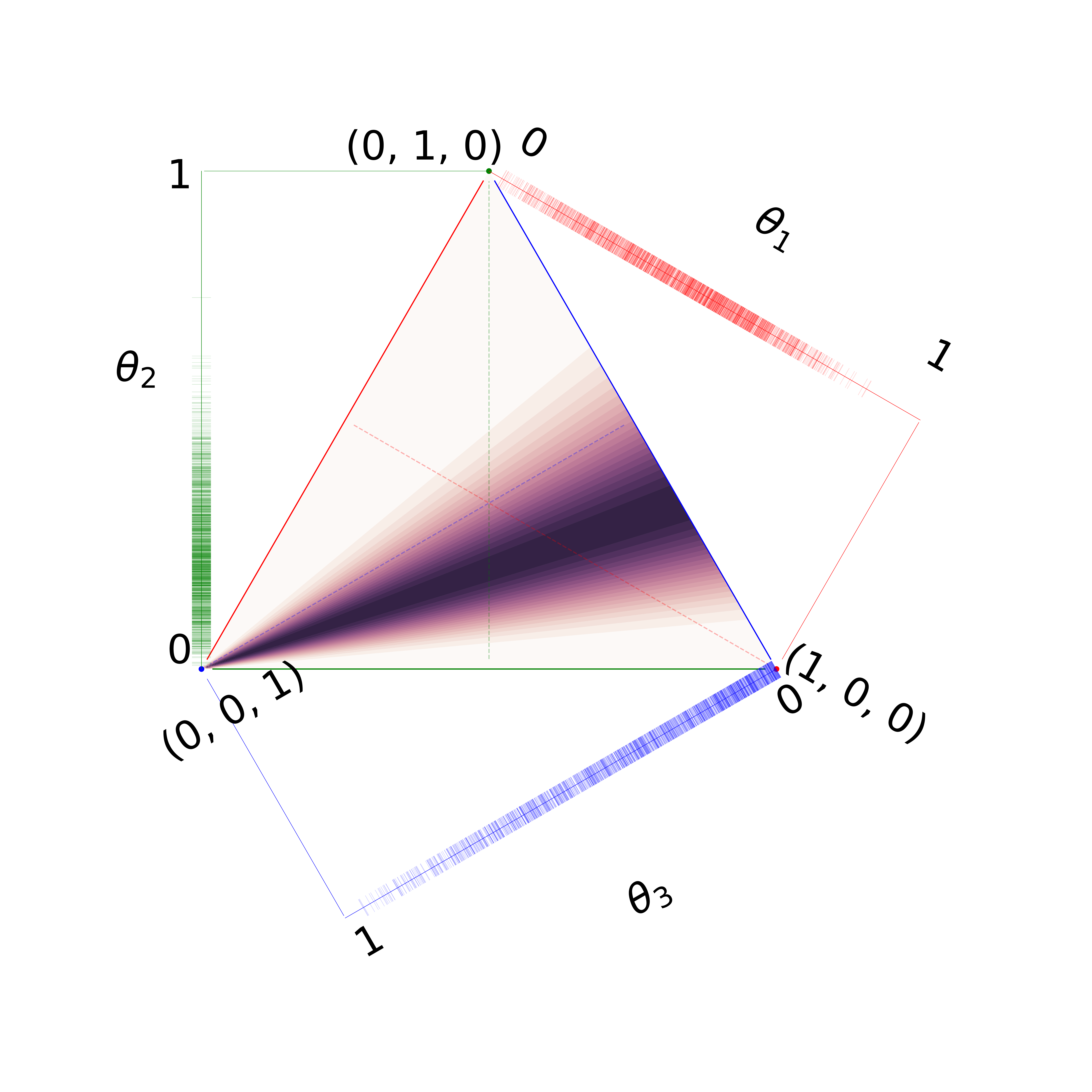}
		\caption{$p(\theta\mid k_{1:T}, C_{1:T}, \alpha, \beta_0)$}
	\end{subfigure}
	\begin{subfigure}{0.33\textwidth}
		\includegraphics[width=\linewidth]{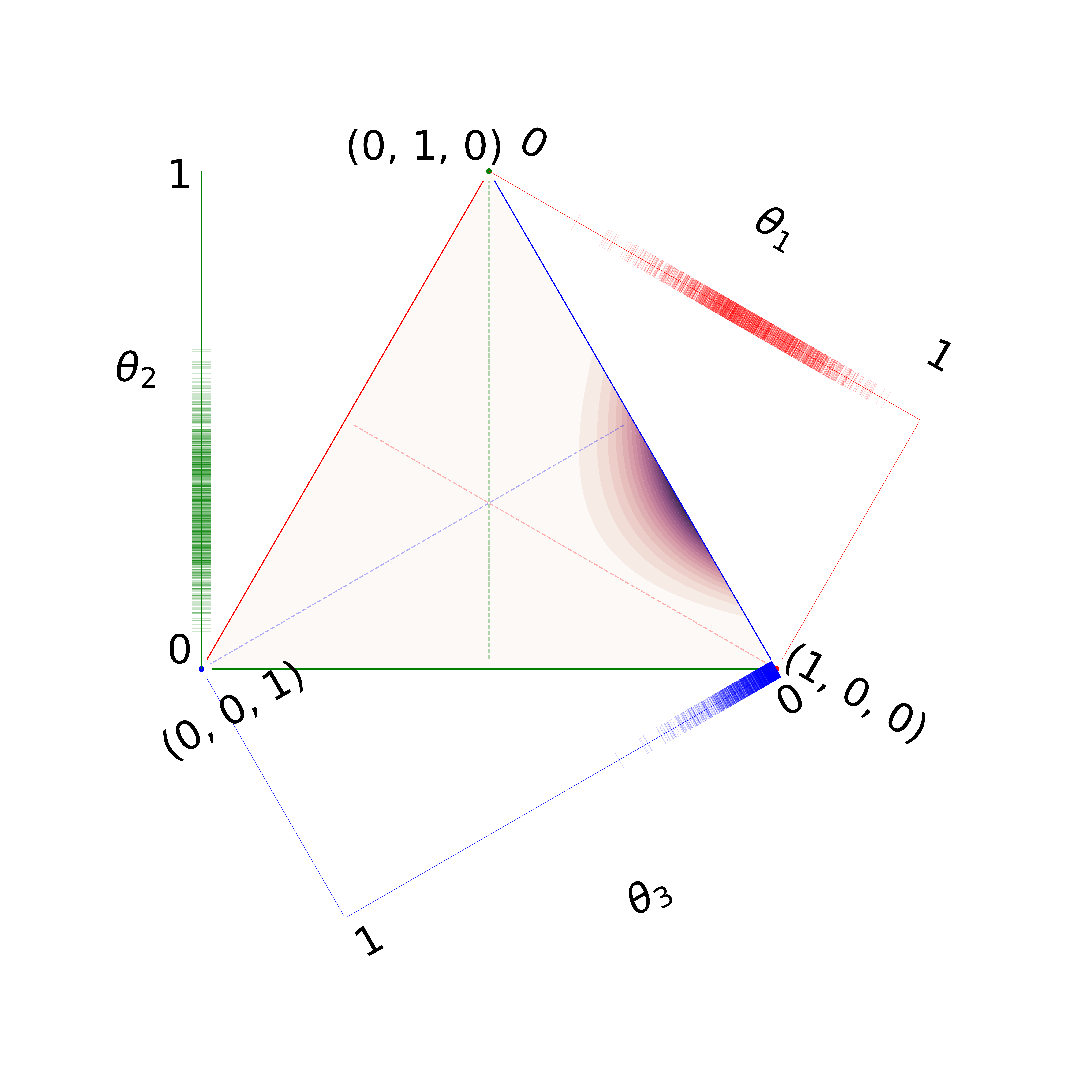}
		\caption{Dirichlet posterior $p(\theta\mid k_{1:T},\alpha)$, $C_{1:T}$ is ignored}
	\end{subfigure}
	\caption{Prior choice probabilities, along with the posterior where options $1$ and $2$ were preferred to presentation $\{1,2\}$ $10$ and $5$ times, respectively. Contours of the joint distribution are shown in a simplex plot. Samples from posterior marginals for each $k\in [K]$ are marked along an axis parallel to the line segment from the vertex where $\theta_k = 1$, perpendicular to the base (where $\theta_k = 0$). Since choices are made from $\{1,2\}$, marginal of $\theta_3$ must stay invariant as in (b). But if we ignore what presentations are made, the option $3$ is unfairly penalized, as shown in (c).}
	\label{fig:bubble}
\end{figure*}

\section{Learning to Present}\label{sec:presentation}
Cast as the model assumption of an interactive system, the Dirichlet-Luce model provides fair and efficient preference estimates.
However, in a real-world scenario, the onus is on the system to select a subset of options to present.
That is, the system needs an efficient {\em presentation mechanism} that simultaneously explores the options which the user might like and exploits the current best alternatives.

Here, we frame this active preference learning scenario as a bandit problem.
In a bandit setting, the Bayesian construction of our model serves a dual purpose. 
First, new choice observations can be used to inform efficient approximate inference of latent preferences.
More importantly, however, posterior samples of the model serve as a natural means to manage the exploration-exploitation trade-off, inducing a presentation mechanism conditioned on previous choices ($k_{1:T}$) and the mechanism itself ($C_{1:T}$).

The posterior places high probability density on preferences where $\theta_k$ is high either when $k$ was chosen frequently or presented rarely.
Then, presenting top $L$ options of the vector $\theta$ sampled from the posterior would serve as a natural presentation mechanism.
This approach is an instance of Thompson sampling \citep{thompson33}, and affords fast exploration of the space of subsets for reasons analogous to the fast convergence of posterior preferences under Dirichlet-Luce model. Pseudo-code is given in Algorithm~\ref{alg:thompson}.

Thompson sampling plays an essential role in fair preference estimates together with the Dirichlet-Luce model.
Assume that options $\{1, 2\}$ were presented, and $1$ was chosen.
In the remaining rounds, if $1$ loses to other options in presentations where $2$ {\em does not} appear, the mean preference estimate of $\theta_2$ will {\em still decrease}---a side effect of the stochastic transitivity assumption.
However, Thompson sampling ensures options that are {\em underrepresented} in previous interactions will be presented, \ie, they will get a second chance.
In other words, while our model assumption reduces negative bias on options that do not appear in presentations, Thompson sampling ensures that they appear in subsequent presentations in order to form an accurate preference estimate. We study synthetic examples in the appendix.
\begin{wrapfigure}{r}{0.5\textwidth}
\vspace{+65pt}
\begin{minipage}{0.5\textwidth}
  \begin{algorithm}[H]
    \caption{Thompson Sampling for Presentation}
    \begin{algorithmic}\label{alg:thompson}
        \STATE {\bfseries Input:} $T$
        \STATE Initialize $\alpha$, and set $\beta\gets\beta_0$
        \STATE $\mu(C)\gets0$ for all $C\in\mathcal{C}$
        \STATE $y_k \gets 0$ for all $k\in [K]$
        \FOR{$t=1$ {\bfseries to} $T$}
        \STATE Sample $\theta \sim p_t(\theta\mid y, \mu, \alpha, \beta)$
        \STATE Form $C_t$ with top $L$ options from sampled $\theta$ 
        \STATE Get preference feedback $k_t$ to $C_t$
        \STATE $\mu (C_t) \gets \mu (C_t) + 1$ \COMMENT{Update sufficient statistics}
        \STATE $y_{k_t} \gets y_{k_t} + 1$
        \ENDFOR
    \end{algorithmic}
\end{algorithm}
\end{minipage}
\end{wrapfigure}

{\bf Sequential Monte Carlo Sampling}
We implement the sampling subroutine in Algorithm~\ref{alg:thompson} via a sequential Monte Carlo (SMC) algorithm \citep{chopin2002}.
Specifically, after each presentation, a set of importance-weighted particles are updated recursively based on the presentation-choice pair $(C_t, k_t)$.
When the {\em effective sample size} drops below a set threshold, we perform multinomial resampling followed by a Metropolis-within-Gibbs step for moving the particles, \ie, particles undergo a {\em resample-and-move} update.

It is worth noting that the conjugacy in our model leads to particle weight updates that take a negligible $O(L)$ time.
Posterior evaluations, which are expensive, are only needed during the resample-and-move step. 
However, as the number of observations $T$ grows, the posterior peaks around the latent preference vector and the algorithm requires resampling with decreasing frequency. 
That is, expensive resampling steps are dominant as the algorithm tends towards exploration, and very rare when the posterior is peaked and the algorithm ``commits'' a preference representation. 
The complete presentation mechanism with the SMC sampling procedure is described in the supplementary material.

\section{Related Work}\label{sec:related}
Modeling implicit feedback data sets as conditionally multinomial observations is an approach that has received recent attention, see, for example, the works by \citet{youtube16deep}, \citet{liang2018variational}, and references therein. 
In line with the early works of \citet{bradley} for the pairwise preferences case, generalized by \citet{choice} and \citet{permutations} to $L$-wise preferences, a choice from a limited set can be assumed a restricted multinomial. 
Notably, \citet{collaborativecompetitive} also assume the same likelihood that we consider here, although they do not propose an inference method or presentation mechanism.

Random utilities underlying choice probabilities are commonly modeled by a collection of independent Gumbel-distributed random variables \citep{yellott77, rumsocialchoice2012}. 
A Bayesian treatment of this model appears in \citep{bayesianplackett2009}.
We model the dependence of choices (that were considered by the decision maker) via our generalized Dirichlet assumption.
As noted before, this is a general case of the formulation introduced by \citet{bayesianbt73}, and a special case of those described by \citet{dickeyhypergeometric} and \citet{hyperdirichlet}.

We compose presentations via the ranking induced by sampled preferences. 
For presentations comprising of $L>2$ options, we can think of this mechanism as an algorithm for online {\em learning to rank} \citep{learningtorank} with a document-based click (in our case, choice) model, ignoring the position of an option in the list of presented alternatives \citep[see][Section 3]{clickmodels}.
Alternative click models, which we do not consider here, were studied in the information retrieval literature \citep{positionbias}. 
We use the recently introduced ``TopRank'' algorithm \citep{toprank} as a baseline to measure the effectiveness of our presentation mechanism in online learning to rank.
TopRank subsumes various choice models including ours, and was shown to perform superior to previous work. 
For $L=2$, a closely related scenario is ``dueling bandits'' \citep{dueling}.
We use ``Double Thompson Sampling'' (DTS), introduced by \citet{doublets},  as a baseline in our pairwise preference simulations.
DTS has been referred to many times as one of the best performing dueling bandits algorithms \citep{doublets, multidueling, battle, duelingadvancements}. 
In {\em battling bandits}---the recently introduced generalization of the dueling bandits problem to subsets with $L>2$ elements--- \citet{battle} identified DTS as the best performing base method under various feedback models.
\section{Experiments}\label{sec:simulations}
{\bf $L$-wise Presentations}
We first study an interactive system where a simulated user chooses from presentations of size $L=5$ based on a latent $\theta^*$, \ie, the probability of choosing an alternative $k$ is $p(k\mid C) = \frac{[k \in C]\theta^*_k}{\sum_{\kappa\in C} \theta^*_\kappa}$. 
We report a comparison of our method to the top-5 options chosen by {\em TopRank} \citep{toprank}.
We compare performances of the two algorithms by {\em cumulative regret} for the top-$N$ options included in the presentation by both algorithms,
\begin{equation*}
    R_T =  T \left(\max_{i_1, i_2, \cdots, i_N}\sum_{n = 1}^N \theta^*_{i_n}\right) - \sum_{t=1}^T \sum_{n=1}^N\theta^*_{\kappa_n^{(t)}}.
\end{equation*}
We use the ranking given by TopRank, and the ranking implied by the sampled preference vector in the case of Dirichlet-Luce.

We run experiments for $K=50$, and report cumulative regret. 
We run two separate experiments for sparse and dense $\theta^*$.
Across varying $N$, the Dirichlet-Luce bandit categorically achieves lower cumulative regret. 

\begin{figure}[h!]
\centering
\begin{subfigure}{0.42\textwidth}
	\includegraphics[width=\linewidth]{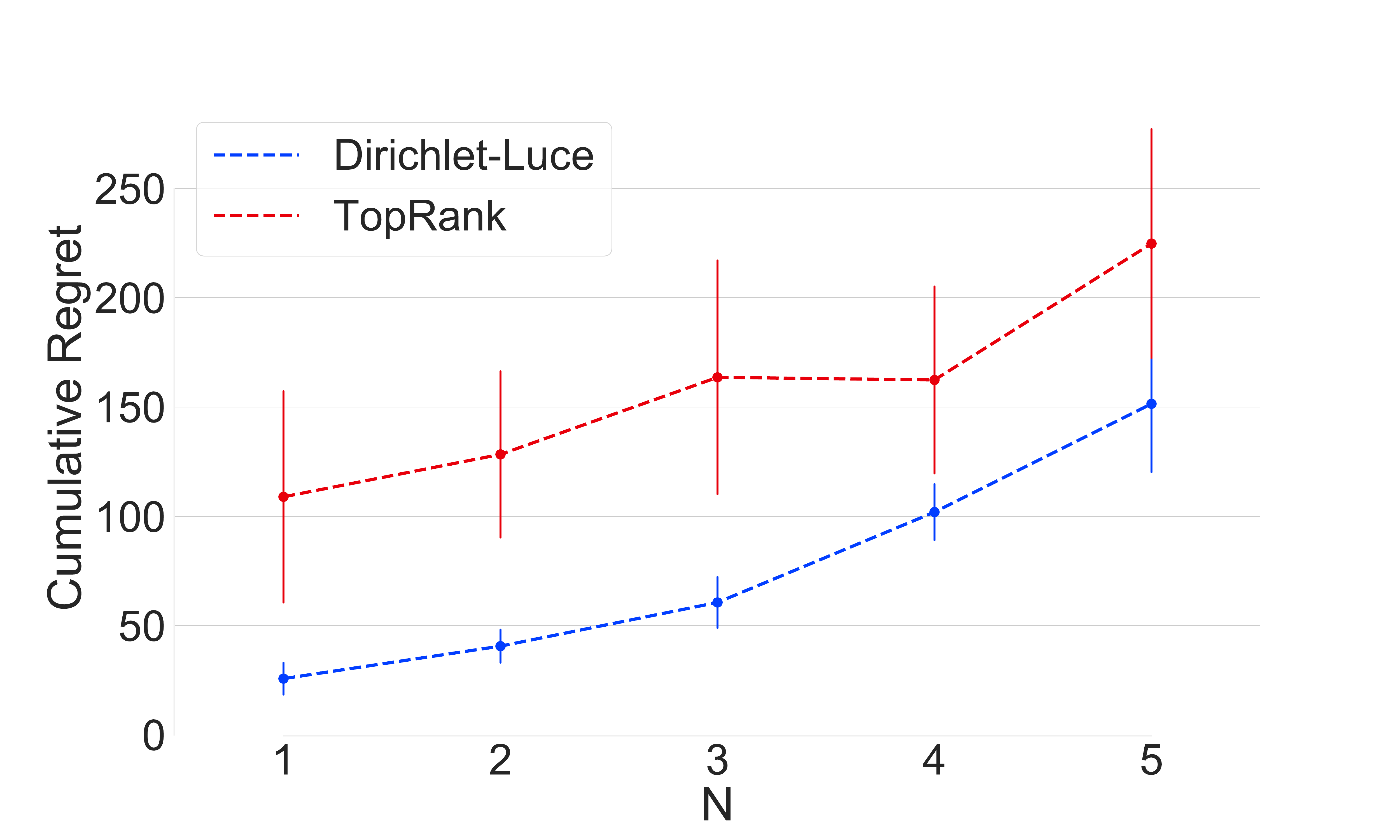}
	\caption{Sparse $\theta^*$}
\end{subfigure}
\begin{subfigure}{0.42\textwidth}
	\includegraphics[width=\linewidth]{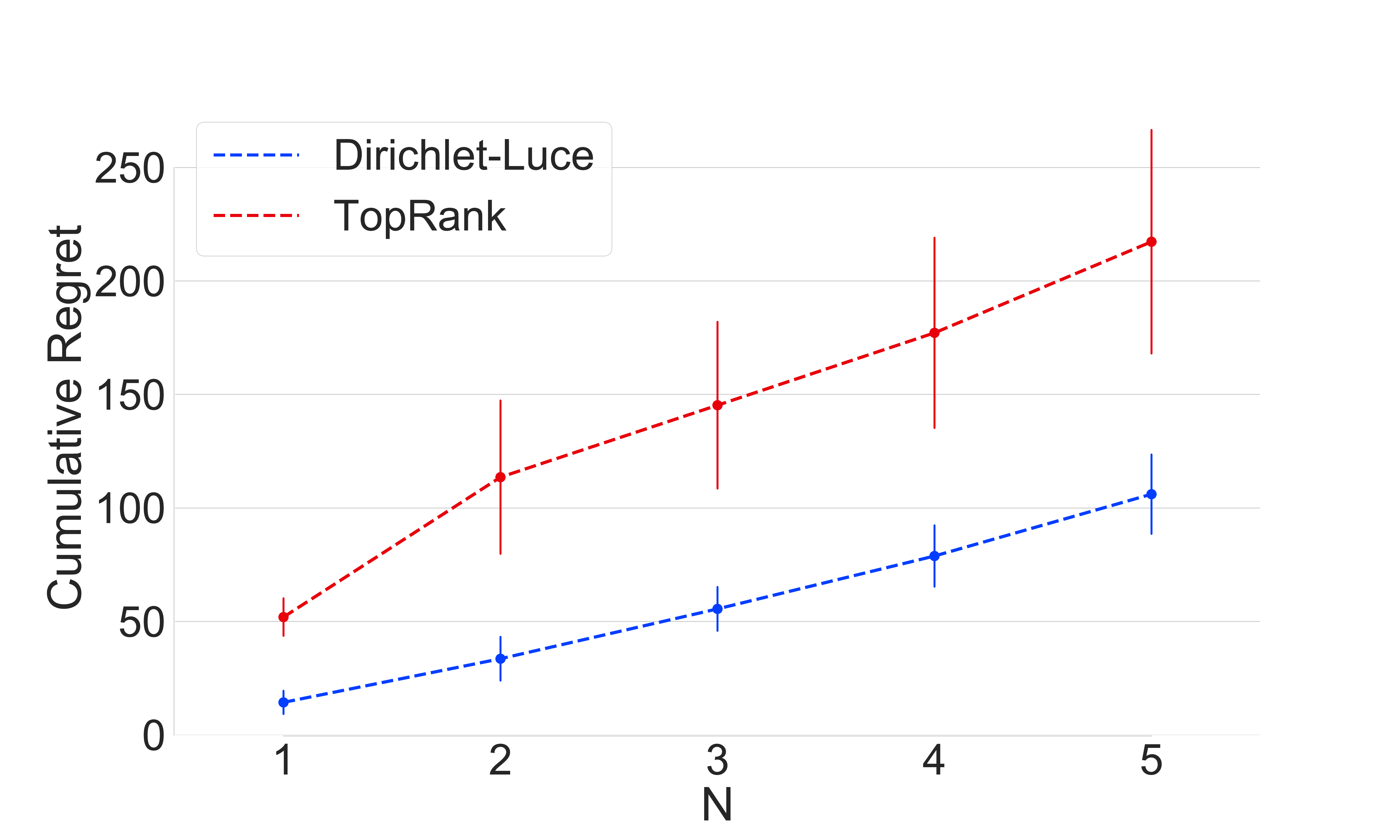}
	\caption{Dense $\theta^*$}
\end{subfigure}
\caption{Average cumulative regret at top-$N$ alternatives included in presentations in online learning to rank scenario after round $T=10000$. Error bars denote the standard deviation. TopRank hyperparameter $\delta$ was optimized in a held-out experiment.}
\label{fig:toprank}
\end{figure}

{\bf Learning from Pairwise Preferences}
We briefly focus on the case where $L=2$, of particular interest since this specific instantiation of the problem can be viewed as an instance of {\em dueling bandits} \citep{dueling}---where couples of options $C, |C| = 2$ are presented (a {\em duel} is set) and the ``winner'' $k \in C$ observed. 
Specifically, we compare Dirichlet-Luce Thompson sampling with a specialized dueling bandits algorithm, \textit{Double Thompson Sampling} (DTS) \citep{doublets}.
We simulate user feedback to pairwise presentations, first assuming a true $\theta^*$ and simulating choices as in $L$-wise presentations.
We also introduce a more challenging scenario---an irrational user whose preferences do not admit a total ordering.
In this case, we assume a {\em Condorcet} winner, an option preferred over all others, where preferences are otherwise cyclical and simulated from a pairwise preference probabilities table $p(k | \{k, l\})$.

\begin{figure*}[h!]
\centering
\begin{subfigure}{0.32\textwidth}
	\includegraphics[width=\linewidth]{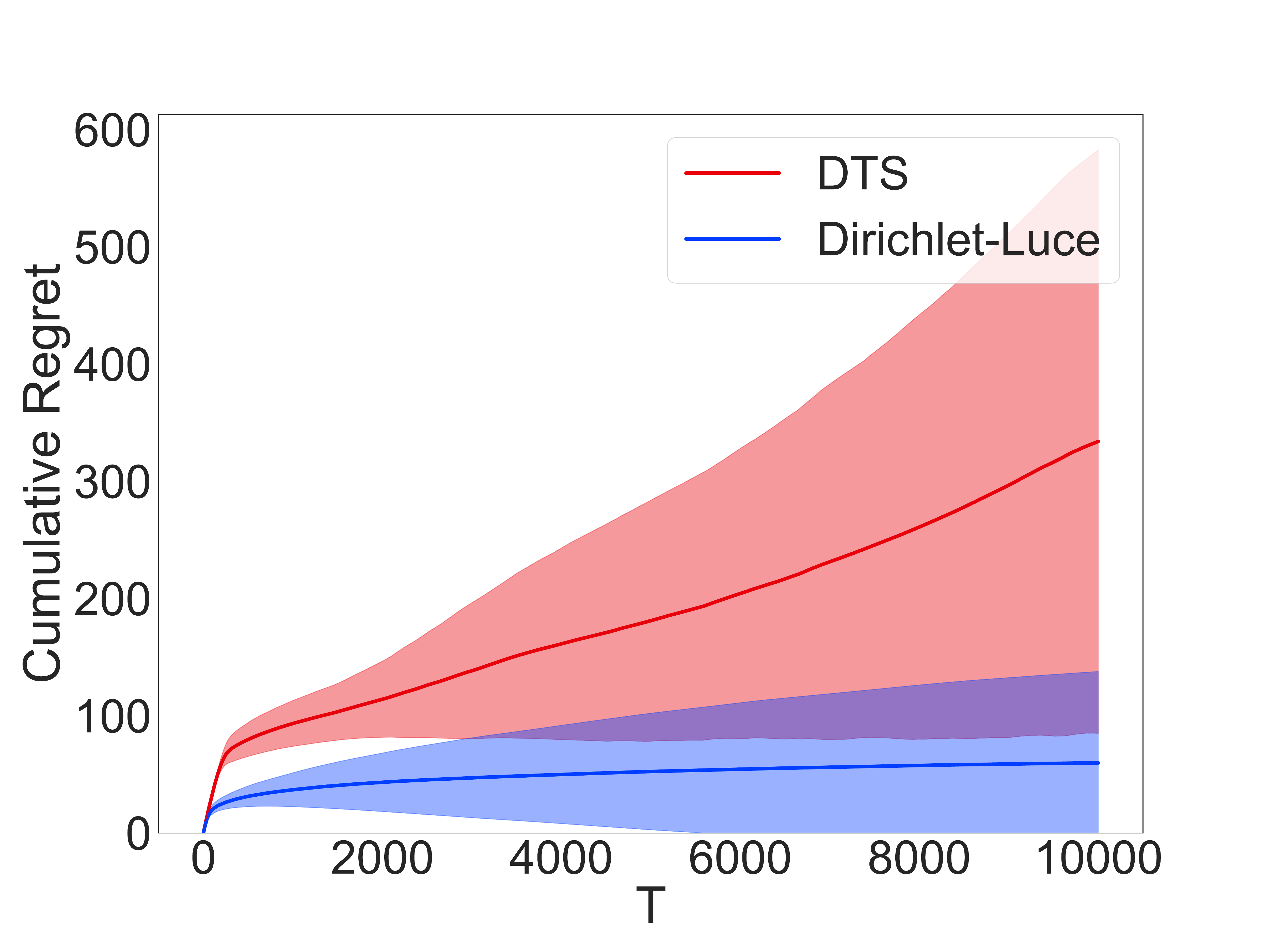}
	\caption{Sparse $\theta^*$}
\label{fig:transitive1}
\end{subfigure}
\begin{subfigure}{0.32\textwidth}
	\includegraphics[width=\linewidth]{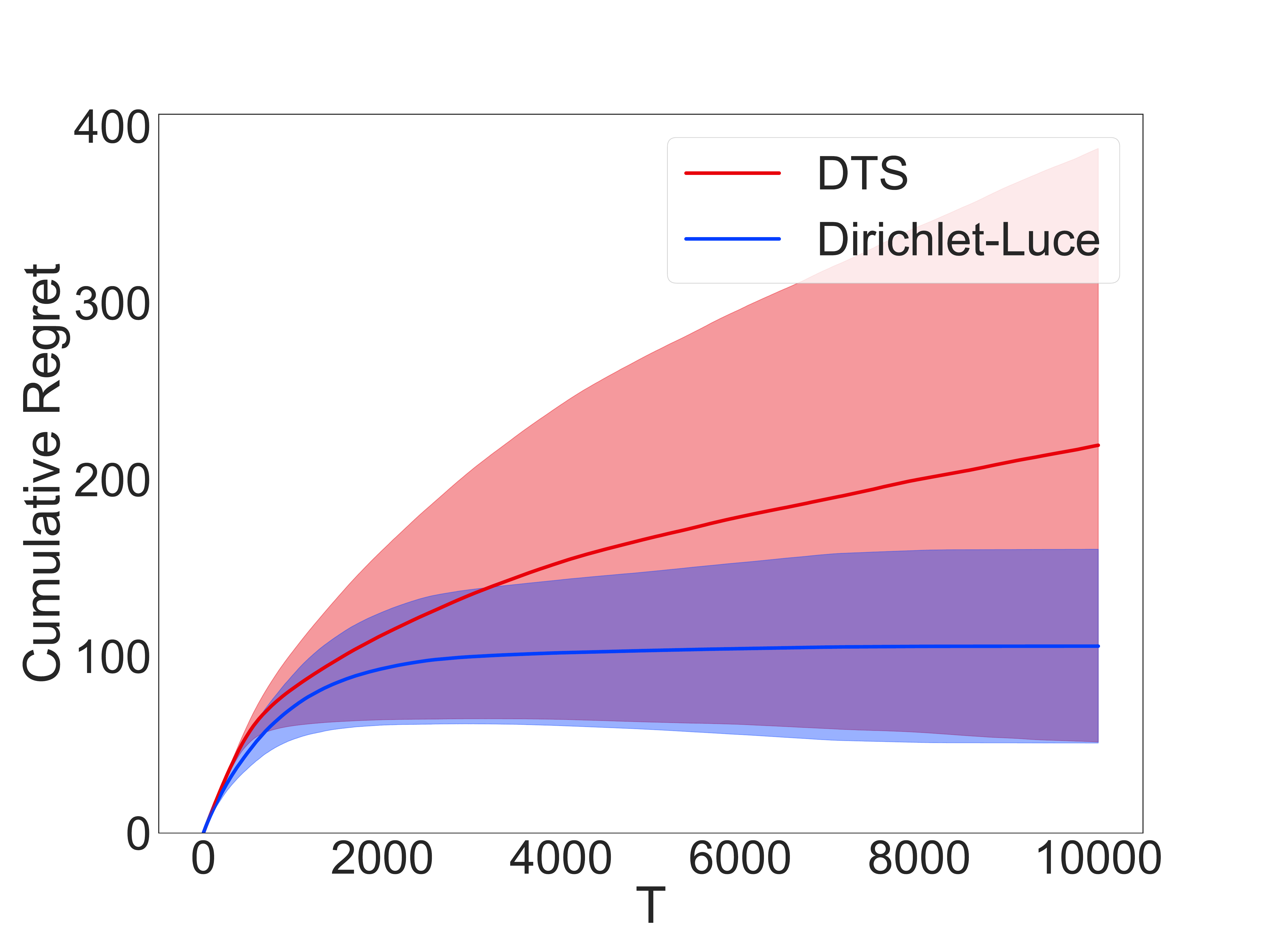}
	\caption{Dense $\theta^*$}
\label{fig:transitive2}
\end{subfigure}
\begin{subfigure}{0.32\textwidth}
	\includegraphics[width=\linewidth]{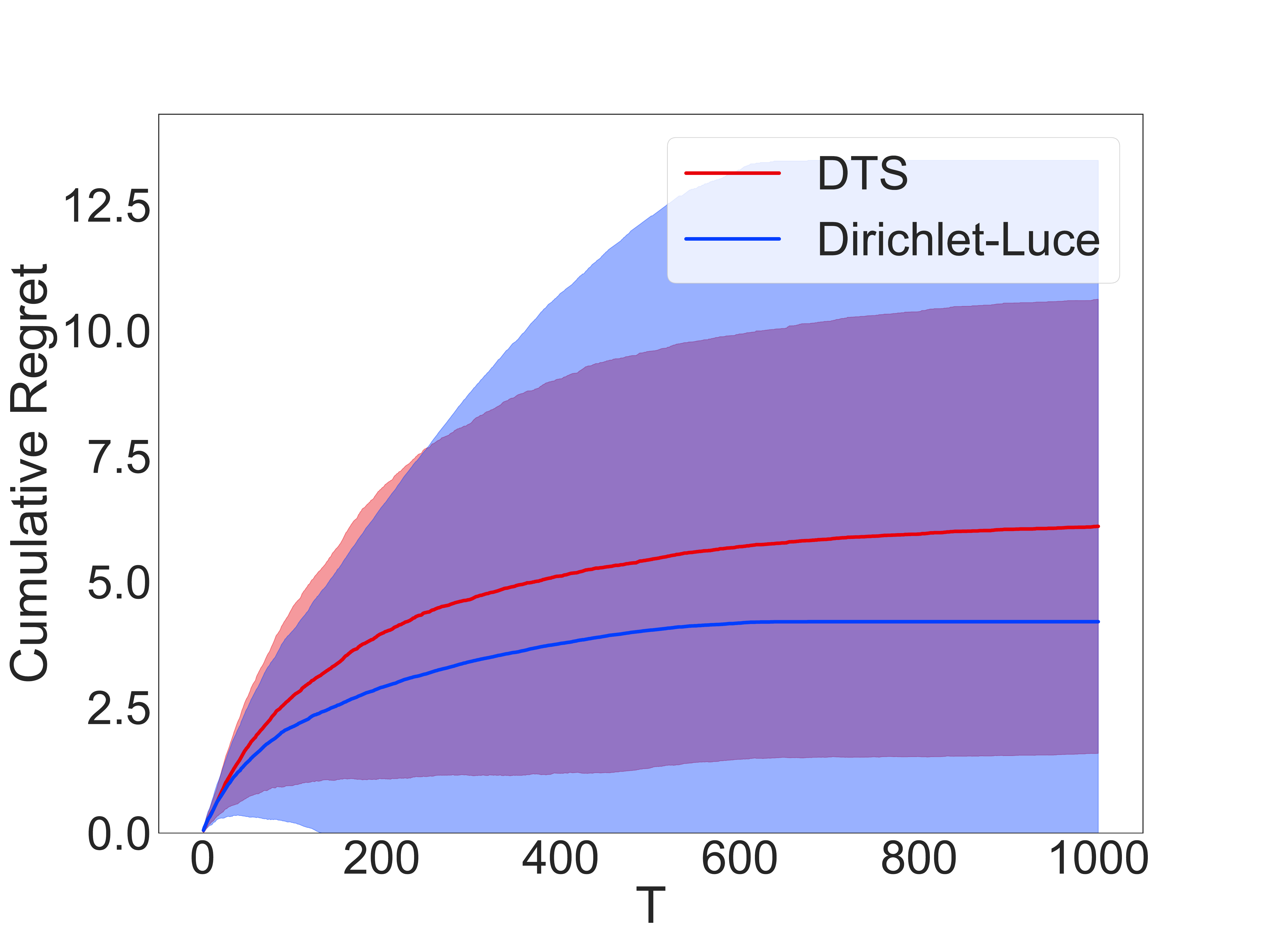}
	\caption{Cyclic}
	\label{fig:cyclicdueling}
\end{subfigure}
\caption{Average cumulative weak dueling regret (lower is better) in dueling bandits setup for Double Thompson Sampling (DTS) and the proposed presentation mechanism (JTS) with simulated transitive (a and b) or cyclic (c) preference feedback. Shaded regions denote the standard deviation.}
\label{fig:transitive}
\end{figure*}

We report a comparison of {\em weak} dueling regret \citep{dueling} in Figure~\ref{fig:transitive}.
We find that the proposed algorithm results in substantially lower regret when stochastic transitivity is assumed to hold. 
Furthermore, despite the fact that the DTS algorithm's model assumption can capture non-transitive (cyclical) choice behavior, our algorithm outperforms DTS in the average case. 
That said, however, our algorithm results in high variance of regret.
We provide further details of our experiment setup in the supplementary material.

{\bf Growing Presentation Sizes}
Our presentation mechanism outperforms baselines in both pairwise ($L=2$) and subset-wise ($L=5$) selection tasks.
In Figure~\ref{fig:battling-regret}, fixing $N=2$, we explore how learning speed improves as the system is allowed to make larger presentations. 
As expected, growing presentation sizes leads to lower cumulative regret, \ie, the algorithm learns to present the top 2 options sooner.
We also report the dimensionality of the statistic $\mu$---the number of unique presentations explored before converging to a preference estimate.
Despite the potentially high complexity, the mechanism maintains manageably low-dimensional statistics (Figure~\ref{fig:battling-card}). 

\begin{figure}[h!]
\centering
\begin{subfigure}{0.42\textwidth}
	\includegraphics[width=\linewidth]{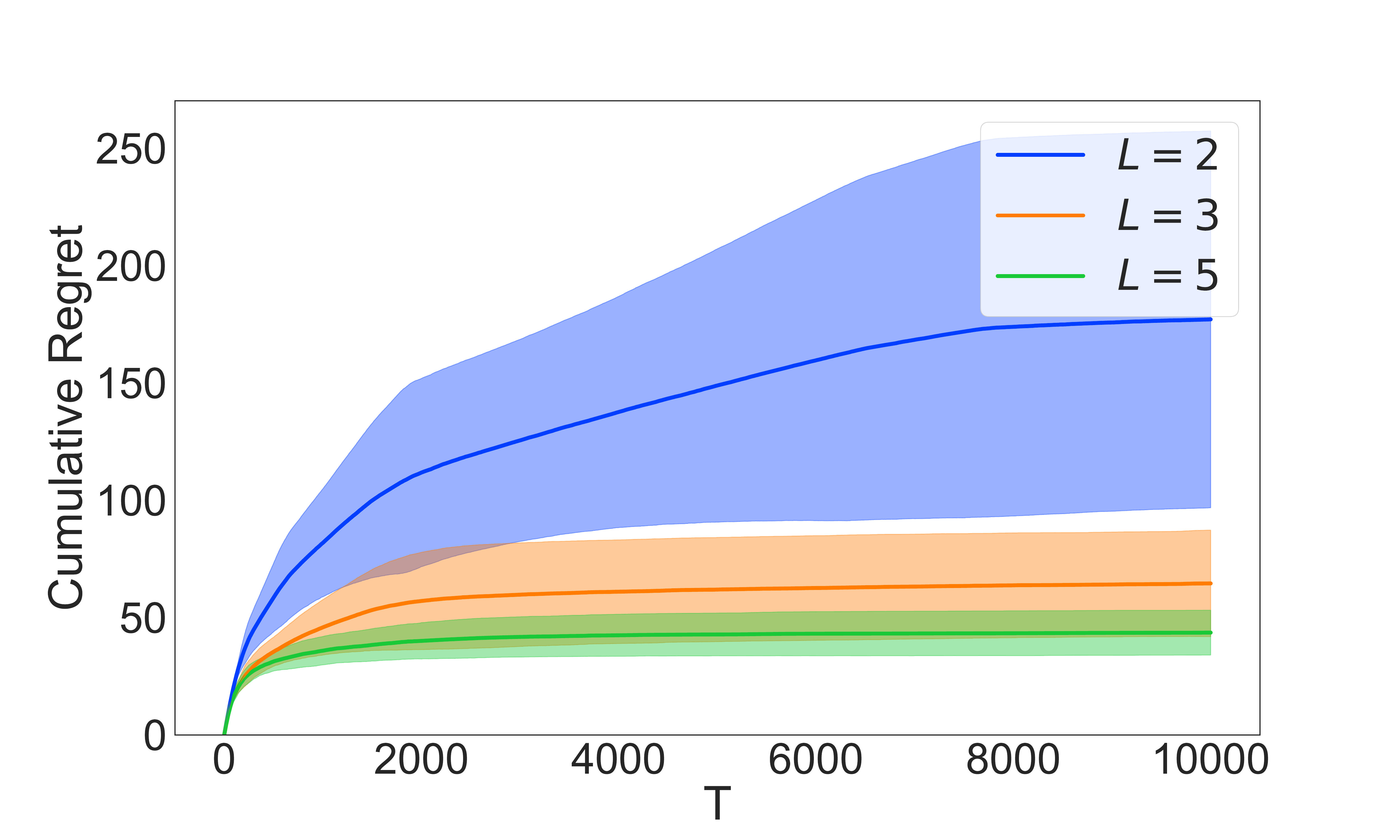}
	\caption{Performance at top-2}
	\label{fig:battling-regret}
\end{subfigure}
\begin{subfigure}{0.42\textwidth}
	\includegraphics[width=\linewidth]{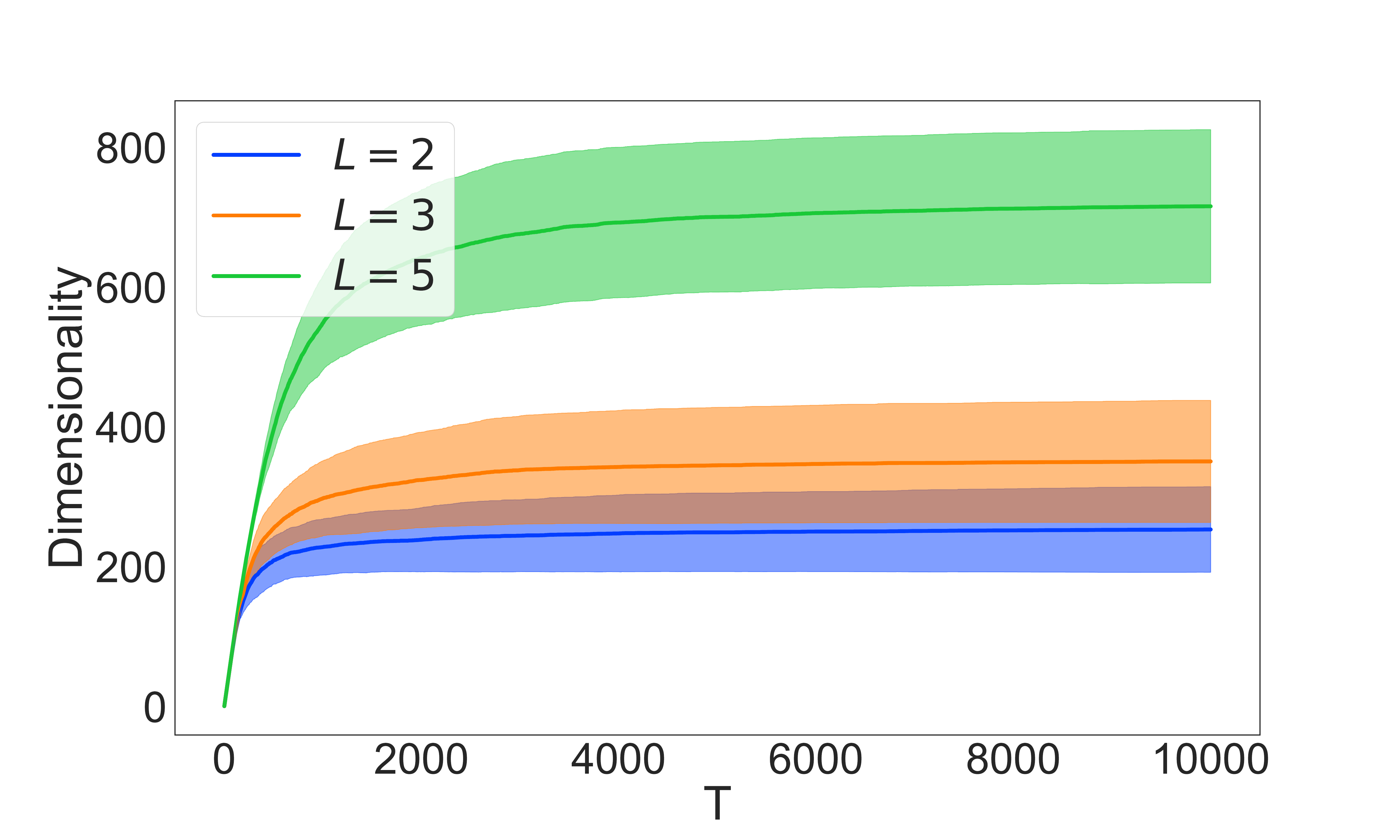}
	\caption{Number of unique presentations}
	\label{fig:battling-card}
\end{subfigure}
\caption{Average cumulative regret (over 10 runs) at top-2 options (out of 100) included in presentations with different sizes (a), and the effective dimensionality of the statistic $\mu$ (b) over the course of interactions. Shaded regions denote the standard deviation.}
\label{fig:battling}
\end{figure}
\section{Discussion}\label{sec:discussion}
In this paper, we studied online preference elicitation in an interactive system where the user chooses one among a limited number of systematically presented options. 
We introduced Dirichlet-Luce, a Bayesian choice model that is aware of limited exposure and admits efficient learning and inference.
Bayesian treatment of the model paved the way for online estimation of user preferences through a novel presentation mechanism based on Thompson sampling. 

Overall, our work combines elements from Bayesian inference, choice modelling, active learning to rank, and bandit algorithms literatures to propose a novel framework to tackle self-reinforcing feedback loops in interactive personalization systems. 

Many exciting avenues for future work remain to be explored.
Our model can be used as a building block in more complex model architectures. 
The proposed approach can be adapted to a multi-user setting as in collaborative filtering where users are assumed to share interests.

\bibliography{bibl}

\pagebreak

\appendix
\renewcommand{\thesection}{\Alph{section}}
\section{Supplementary Material}\label{appx:model}
\subsection{Dirichlet-Luce Model}
\subsubsection{Posterior Predictive Inference and Partition Function}

The Dirichlet-Luce posterior log potential $\phi(\theta)$ was defined
\begin{equation}\label{eq:potential}
\phi(\theta) 
= \log \left[\prod_k  \theta_k^{y_k+\alpha_k-1} \prod_{C\in \mathcal{C}} \left(\sum_{\kappa \in C}\theta_\kappa\right)^{-\beta(C)-\mu(C)} \right].
\end{equation}
Here, as in the main text, the index $C$ runs over L-way combinations of the set of choices $[K]$, and $k$ indexes the choices themselves.
For ease of exposition, let us introduce the indicator matrix $Z \in \{0, 1\}^{K \times {K \choose L}}$, defined $z_{k, C} = [k \in C]$, 
\begin{equation}\label{eq:pot_2}
\phi(\theta) 
= \log \left[ \prod_k \theta_k^{y_k+\alpha_k-1} \prod_{C\in \mathcal{C}} (\mathbf{z}_{:, C}^\top \theta) ^{-\beta(C)-\mu(C)} \right].
\end{equation}

Many quantities of interest for the Bayesian choice model are written in terms of the normalizing constant (partition function) of $\phi$, $\int_\Delta \exp \phi(\theta) d\theta$. 
The normalizer of $\phi$ is written in terms of a special hypergeometric function known as Carlson's $\mathcal{R}$ function,
\begin{align*}  \label{eq:R}
\mathcal{R}(\alpha + \mathbf{y}, Z, \beta + \mu) &= \dfrac{1}{B(\alpha + \mathbf{y})} \int_\Delta \exp \phi(\theta) d\theta \\
&= \dfrac{1}{B(\alpha + \mathbf{y})} 
   \int_\Delta \prod_k  \theta_k^{y_k+\alpha_k-1} \prod_{C\in \mathcal{C}} \left(\sum_{\kappa \in C}\theta_\kappa\right)^{-\beta(C)-\mu(C)} d\theta,
\end{align*}
where $B(\alpha)$ denotes the multivariate Beta function.

We can write $p(k_{1:T}\mid C_{1:T},\alpha, \beta)$, probability of a sequence of choices conditioned on a sequence of presentations and hyperparameters, as ratios of $\mathcal{R}$ functions

\begin{align*}
p(k_{1:T}\mid C_{1:T},\alpha, \beta) &= \int_{\Delta} p(\theta\mid \alpha, \beta) p(k_{1:T}\mid C_{1:T}, \theta)  d\theta \\
&= \int_{\Delta} p(\theta\mid \alpha, \beta) \prod_k \theta_k^{y_k} \prod_{C\in \mathcal{C}}{(\theta^\top \mathbf{z}_{:, C}})^{-{\mu(C)}} d\theta \\
&= \frac{\prod_k (\alpha_k)_{(y_k)}}{(\sum_k \alpha_k)_{(N)}} \frac{\mathcal{R}(\alpha+\mathbf{y}, Z, \beta+\mu)}{\mathcal{R}(\alpha, Z, \beta)},
\end{align*}
where the notation $(x)_{(n)} = \frac{\Gamma(x+n)}{\Gamma(x)}$ denotes the rising factorial.

This gives the predictive preference of option $k$ (over all $[K]$) as
\begin{equation*}
p(k|[K], \alpha, \beta)= \int_{\Delta} p(\theta\mid \alpha, \beta) \theta_k  d\theta = \frac{\alpha_k}{\sum_j \alpha_j} \frac{\mathcal{R}(\alpha + \mathbf{k}, Z, \beta + [0, 0, \cdots, 1])}{\mathcal{R}(\alpha, Z, \beta)},
\end{equation*}
where $\mathbf{k}$ is the indicator vector of size $K$ where all but the $k$-th element are 0.

Computation of the $\mathcal{R}$ function was studied in \citep{rcomputation}. 
Here, we reiterate some key results to highlight implications for our case.

\begin{lemma}(\citep{rcomputation} Lemma 3.2)\label{lemma:r_zero}
Let $G \in \IR^{K \times C}$, $\mathbf{a} \in \IR^{K}, \mathbf{b} \in \IR^{C}$.
Assume $\mathbf{b}$ admits a permutation such that $\mathbf{b} = [\Tilde{\mathbf{b}} \,\, \mathbf{0}]$ where $\mathbf{0}$ denotes a vector of zeros. Also, let $\Tilde{G}$ denote $G$ with columns permuted conformably to $\mathbf{b}$. Then, $\mathcal{R}(\mathbf{a}, G, \mathbf{b}) = \mathcal{R}(\mathbf{a}, \Tilde{G}, \Tilde{\mathbf{b}}).$
\end{lemma}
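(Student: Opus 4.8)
The plan is to argue directly from the integral (equivalently, Dirichlet-average) definition of Carlson's $\mathcal{R}$ function rather than from any series representation. Writing
\[
\mathcal{R}(\mathbf{a}, G, \mathbf{b}) = \frac{1}{B(\mathbf{a})} \int_\Delta \prod_{k} \theta_k^{a_k - 1} \prod_{j} \left(\mathbf{g}_{:, j}^\top \theta\right)^{-b_j} \, d\theta,
\]
where $\mathbf{g}_{:,j}$ is the $j$-th column of $G$, I would first observe that the integrand splits as a product over the column index $j$, while the Dirichlet-type weight $\prod_k \theta_k^{a_k-1}$ and the normalizer $B(\mathbf{a})$ depend only on $\mathbf{a}$ and are untouched by any manipulation of the columns.

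The argument then has two short steps. First, permuting the columns of $G$ and the entries of $\mathbf{b}$ by the same permutation merely reorders the factors of the finite product $\prod_j (\mathbf{g}_{:,j}^\top \theta)^{-b_j}$, keeping each column paired with its own exponent; by commutativity of multiplication the integrand is unchanged pointwise, so the integral---and hence $\mathcal{R}$---is invariant. This justifies assuming without loss of generality that $\mathbf{b} = [\tilde{\mathbf{b}} \,\, \mathbf{0}]$ with $\tilde{G}$ the conformably permuted $G$. Second, for every index $j$ with $b_j = 0$ the corresponding factor is $(\mathbf{g}_{:,j}^\top \theta)^{0} = 1$ and so drops out of the product. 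What remains is exactly $\prod_k \theta_k^{a_k-1} \prod_j (\tilde{\mathbf{g}}_{:,j}^\top \theta)^{-\tilde{b}_j}$, which together with the unchanged normalizer $B(\mathbf{a})$ is by definition $\mathcal{R}(\mathbf{a}, \tilde{G}, \tilde{\mathbf{b}})$, yielding the claimed equality.

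The only point requiring care---and the one I expect to be the main (if mild) obstacle---is making the step $x^0 = 1$ rigorous, since it is ambiguous when $x = 0$. Here $x = \mathbf{g}_{:,j}^\top \theta$ is a nonnegative linear form, and I would note that the boundary of $\Delta$ (where some $\theta_k$ vanishes) has Lebesgue measure zero, so it suffices that $\mathbf{g}_{:,j}^\top \theta > 0$ on the relative interior of $\Delta$; this holds whenever column $j$ is a nonzero vector, which is the case for the indicator columns $\mathbf{z}_{:,C}$ arising in our model, each corresponding to a nonempty presentation $C$. The factors with $b_j = 0$ are therefore equal to $1$ almost everywhere and can be discarded without changing the integral. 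Convergence is inherited because deleting zero exponents alters neither $\mathbf{a}$ nor any nonzero entry of $\mathbf{b}$, so none of the conditions governing integrability are affected.
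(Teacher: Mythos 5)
The paper does not actually prove this lemma---it is imported verbatim as Lemma 3.2 of the cited reference on computing Carlson's $\mathcal{R}$ function, so there is no in-paper argument to compare yours against. Your proof from the integral (Dirichlet-average) representation is correct and is the natural one: column/exponent permutations only reorder factors of a finite product, factors with zero exponent are identically $1$ on the relative interior of the simplex (a full-measure set, since the relevant columns are nonzero indicator vectors of nonempty presentations), and the normalizer $B(\mathbf{a})$ never enters. The only caveat worth recording is one of generality: the lemma as stated allows arbitrary $\mathbf{a}\in\IR^K$, $\mathbf{b}\in\IR^C$, and for parameter values where the integral fails to converge $\mathcal{R}$ is defined by a series or by analytic continuation, so a fully general proof would either verify the identity on the series representation or invoke uniqueness of analytic continuation from the region where your integral argument applies. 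For the regime in which the paper uses the lemma (positive pseudo-counts, where the integral representation is the operative definition), your argument is complete.
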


\begin{lemma}(\citep{rcomputation})\label{lemma:r_commute}
Let $G \in \IR^{K \times C}$, $\mathbf{a} \in \IR^{K}, \mathbf{b} \in \IR^{C}$, and $\sum_i a_i = \sum_j b_j$. Then, $\mathcal{R}(\mathbf{a}, G, \mathbf{b}) = \mathcal{R}(\mathbf{b}, G^\top, \mathbf{a}).$
\end{lemma}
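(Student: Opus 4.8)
The plan is to prove the duality by embedding both $\mathcal{R}(\mathbf{a},G,\mathbf{b})$ and $\mathcal{R}(\mathbf{b},G^\top,\mathbf{a})$ into a single symmetric integral over two sets of auxiliary variables, and then evaluating that integral in two different orders. Throughout I may assume $\mathbf{a},\mathbf{b}$ are strictly positive: any zero entry of $\mathbf{b}$ contributes a trivial factor to the left-hand side and is removed beforehand via Lemma~\ref{lemma:r_zero}, reducing to the case where $\mathcal{R}(\mathbf{b},G^\top,\mathbf{a})$ is itself a well-defined Dirichlet average.

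First I would recall the integral definition, $B(\mathbf{a})\,\mathcal{R}(\mathbf{a},G,\mathbf{b}) = \int_\Delta \prod_k \theta_k^{a_k-1}\prod_j(\sum_k g_{kj}\theta_k)^{-b_j}\,d\theta$, and introduce the master integral over the positive orthants $\IR_+^K \times \IR_+^C$, namely $\widetilde{\Omega} = \int\!\int \prod_k x_k^{a_k-1}\prod_j s_j^{b_j-1}\,e^{-x^\top G s}\,\delta(\textstyle\sum_k x_k - 1)\,dx\,ds$. The key structural observation is that the bilinear coupling $x^\top G s = s^\top G^\top x$ is symmetric under simultaneously swapping $(x,\mathbf{a},K)\leftrightarrow(s,\mathbf{b},C)$ and $G\leftrightarrow G^\top$, so the only asymmetry in $\widetilde{\Omega}$ is the placement of the gauge-fixing delta.

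Then I would evaluate $\widetilde{\Omega}$ in two ways using the elementary gamma integral $\int_0^\infty t^{c-1}e^{-t\ell}\,dt = \Gamma(c)\,\ell^{-c}$. Integrating out $s$ first collapses $\prod_j s_j^{b_j-1}e^{-x^\top G s}$ into $\prod_j\Gamma(b_j)(\sum_k g_{kj}x_k)^{-b_j}$, and the residual $x$-integral against $\delta(\sum_k x_k - 1)$ is exactly $B(\mathbf{a})\,\mathcal{R}(\mathbf{a},G,\mathbf{b})$, giving $\widetilde{\Omega} = \prod_j\Gamma(b_j)\,B(\mathbf{a})\,\mathcal{R}(\mathbf{a},G,\mathbf{b})$. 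Performing the mirror computation with the gauge instead on $s$ (i.e. $\delta(\sum_j s_j - 1)$) and integrating out $x$ first yields $\prod_k\Gamma(a_k)\,B(\mathbf{b})\,\mathcal{R}(\mathbf{b},G^\top,\mathbf{a})$. Equating the two expressions and substituting $B(\mathbf{a}) = \prod_k\Gamma(a_k)/\Gamma(\sum_k a_k)$ and $B(\mathbf{b}) = \prod_j\Gamma(b_j)/\Gamma(\sum_j b_j)$ — with $\Gamma(\sum_k a_k) = \Gamma(\sum_j b_j)$ by hypothesis — cancels all Gamma prefactors and leaves the claimed identity.

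The crux, and the step I expect to demand the most care, is justifying that the two evaluations compute the same $\widetilde{\Omega}$, i.e. that the gauge-fixing constraint may be moved from $\sum_k x_k = 1$ to $\sum_j s_j = 1$. This is precisely where the hypothesis $\sum_i a_i = \sum_j b_j$ is indispensable: a direct accounting of scaling exponents shows the measure $\prod_k x_k^{a_k-1}\prod_j s_j^{b_j-1}e^{-x^\top G s}\,dx\,ds$ is invariant under the flow $(x,s)\mapsto(\lambda x,\lambda^{-1}s)$ exactly when the two totals agree (the coupling $e^{-x^\top G s}$ being manifestly invariant). Since $\sum_k x_k$ increases and $\sum_j s_j$ decreases strictly monotonically along each orbit, every orbit meets each gauge surface exactly once, so a Faddeev--Popov / coarea argument shows both delta insertions integrate to the same value over the quotient. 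I would also record the standing positivity and integrability conditions ($a_k,b_j>0$, and each linear form $\sum_k g_{kj}x_k$ positive on the domain, which holds for the indicator $G=Z$) needed to invoke Fubini and the gamma identity; everything else is bookkeeping.
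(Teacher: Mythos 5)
The paper offers no proof of this lemma --- it is quoted from the cited reference on computing Carlson's $\mathcal{R}$ function --- so there is no in-paper argument to compare yours against; I can only assess your reconstruction on its own terms. It is correct, and it is essentially the classical route: exhibit both sides as the same symmetric ``double Dirichlet average.'' Your two evaluations of $\widetilde{\Omega}$ via the gamma integral are right, and the hypothesis $\sum_i a_i = \sum_j b_j$ enters exactly where you say. The one step you should actually write out is the gauge-moving claim: ``both delta insertions integrate to the same value over the quotient'' hides a Jacobian, since $\delta(g)$ against an invariant measure depends on $g$ and not just on its zero set. You must check that $|\partial_\lambda g|$ at the crossing cancels against the invariant line element $d\lambda/\lambda$ on each orbit; it does (for $g_1=\sum_k x_k-1$ the crossing is at $\lambda=1/\sum_k x_k$ with slope $\sum_k x_k$, and symmetrically for $g_2$ after $\tau=\lambda^{-1}$), but that is one line of computation, not a remark. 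A cleaner way to land the same punch without deltas: in your first evaluation substitute $s=\rho\sigma$ with $\rho>0$ and $\sigma$ on the simplex and integrate out $\rho$, giving $\widetilde{\Omega}=\Gamma(\sum_j b_j)\int\!\int \prod_k u_k^{a_k-1}\prod_j\sigma_j^{b_j-1}\,(u^\top G\sigma)^{-\sum_j b_j}\,du\,d\sigma$ over the two simplices; the mirror computation yields the identical integral with exponent $-\sum_k a_k$, and the hypothesis makes the two expressions literally the same, which is the whole proof. Finally, record one hypothesis you are silently using: after deleting zero columns of $\mathbf{b}$ via Lemma~\ref{lemma:r_zero}, every row of the reduced $G$ must still be nonzero (each option appears in at least one surviving presentation) for the linear forms $\sum_j g_{kj}\sigma_j$ to be positive and $\mathcal{R}(\mathbf{b},G^\top,\mathbf{a})$ to be finite; this holds for the paper's indicator matrices but is a condition, not a triviality.
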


These results yield an important implication for approximating $\mathcal{R}$ functions.
The dimension of the required integration depends on the sparseness of sufficient statistics, \ie, scales proportionally to the number of unique presentations in the data set.
Also, in our case, the assumption of the second lemma holds by definition.
Then, the dimension of the integration can be simplified by relying on the commutativity implied in Lemma 2, and invoking the sparseness of sufficient statistics.

The $\mathcal{R}$ function admits efficient computation in some specific conditions, such as if the subsets of $[K]$ implied by the columns of $Z$ imply a hierarchy of set partitions on $[K]$.
In the general case, \citep{rcomputation} propose Laplace approximation around the MAP estimate $\theta^* = \argmax_{\theta} \phi(\theta).$
Assume $\lVert z_{:,C}\rVert_0 < L, \forall C$ and let $T$ denote the number of unique presentations in the data.
Laplace approximation requires computing the Hessian and taking its determinant in $O(K^3 + L^2T)$ time. However, when $K$ is reasonably small, the approximation is straightforward to implement.

\subsubsection{Maximum a Posteriori Estimation}

The gradient of the log posterior potential (\ref{eq:pot_2}) can be computed in $O(LT + K)$ time, as the sparsity of $Z$ can be invoked. 
In large data scenarios, the MAP estimate can be used as a point estimate of preferences as well as for approximate inference as described in the previous section.
However, the concern in Section~\ref{sec:dirichlet_luce} for the number of samples required for an accurate estimate should be raised for MAP estimates as well.

In Figure~\ref{fig:mle_norms}, we continue the synthetic experiment setup in Section~\ref{sec:dirichlet_luce}. 
Namely, we generate pairwise comparisons from MergeRank for random dense $\theta^*$. 
We then increase the number of random samples $T$ taken from this data and study the accuracy of MAP preference estimates attained by our model. 
We report the error in estimated $\theta$ in terms of total absolute deviation and Kullback-Leibler divergence. 
We find that the MAP routine yields an accurate preference estimate in a reasonable number of observations.

\begin{figure}[t]
\centering
	\includegraphics[width=\textwidth]{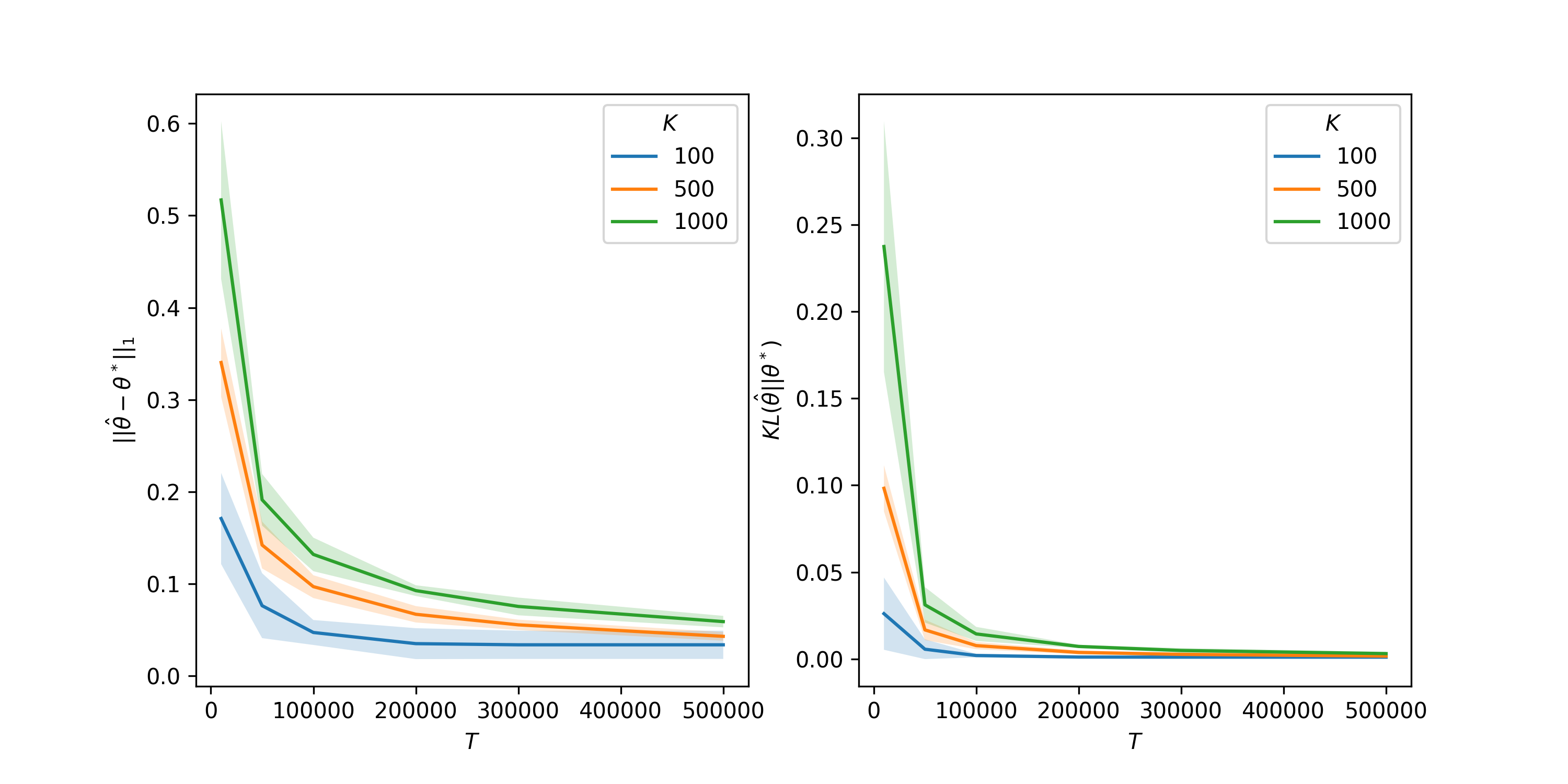}
	\caption{Total absolute deviation and Kullback-Leibler divergence of the estimated $\hat{\theta}$ from the optimal $\theta^*$. }
\label{fig:mle_norms}
\end{figure}

\subsubsection{Independence of Unexplored Options}\label{apx:fairness}

In Section~\ref{sec:dirichlet_luce}, we stated that the posterior leads to {\it fair} preference estimates since choice probabilities of unexplored ($k \in C$ implies $\mu(C)=0$) options are invariant independent of other choices. 
Here, we demonstrate this result deriving the marginal distribution of posterior choice probabilities for such options. 

\begin{lemma} (Independence of unexplored options)
Assume $\mu(C) = 0, \forall C \ni \ell$.
It then follows, $p(\theta_\ell\mid \alpha, \beta_0, k_{1:T}, C_{1:T}) = p(\theta_\ell\mid \alpha, \beta_0)$.
\end{lemma}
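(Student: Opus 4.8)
The plan is to show that the marginal posterior of $\theta_\ell$ is a Beta distribution whose parameters match those of the Dirichlet prior marginal, exploiting the fact that $\theta_\ell$ never couples to any observed presentation factor. First I would simplify the posterior under $\beta = \beta_0$. Since $\beta_0(C)$ is supported only on $C = [K]$ and $\sum_{\kappa \in [K]}\theta_\kappa = 1$ on $\Delta$, the $\beta_0$ factor is identically $1$; moreover $\mu(C) = 0$ for all $C \ni \ell$ forces $y_\ell = 0$, since an unpresented option ($k_t \in C_t$ always) is never chosen. Hence on $\Delta$ the posterior reduces to
\begin{equation*}
p(\theta \mid k_{1:T}, C_{1:T}, \alpha, \beta_0) \propto \theta_\ell^{\alpha_\ell - 1} \prod_{k \neq \ell}\theta_k^{\alpha_k + y_k - 1} \prod_{C : \mu(C) > 0}\Bigl(\sum_{\kappa \in C}\theta_\kappa\Bigr)^{-\mu(C)},
\end{equation*}
and the key structural observation is that every presentation $C$ with $\mu(C) > 0$ omits $\ell$, so $\theta_\ell$ appears only in its own monomial $\theta_\ell^{\alpha_\ell - 1}$.

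Next I would marginalize out $\theta_{-\ell}$ by a rescaling change of variables. Fixing $\theta_\ell = s$, I write $\theta_k = (1 - s)\phi_k$ for $k \neq \ell$, so that $\phi$ ranges over the $(K-2)$-simplex and $d\theta_{-\ell} = (1-s)^{K-2}\,d\phi$. Under this substitution each observed sum becomes $\sum_{\kappa \in C}\theta_\kappa = (1-s)\sum_{\kappa \in C}\phi_\kappa$, so all dependence on $s$ factors out of the $\phi$-integral, leaving
\begin{equation*}
p(\theta_\ell = s) \propto s^{\alpha_\ell - 1}\,(1-s)^{E}\int \prod_{k \neq \ell}\phi_k^{\alpha_k + y_k - 1}\prod_C\Bigl(\sum_{\kappa \in C}\phi_\kappa\Bigr)^{-\mu(C)}\,d\phi,
\end{equation*}
where the $\phi$-integral is an $s$-independent constant and $E = \sum_{k \neq \ell}(\alpha_k + y_k - 1) - \sum_C\mu(C) + (K-2)$.

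Finally I would simplify the exponent $E$. Using $y_\ell = 0$ gives $\sum_{k \neq \ell}y_k = \sum_k y_k = T = \sum_C\mu(C)$, so the $y$ and $\mu$ contributions cancel, and counting the $K-1$ terms of the $-1$ sum against the Jacobian power $K-2$ collapses the exponent to $E = \sum_{k \neq \ell}\alpha_k - 1$. Thus the marginal is proportional to $s^{\alpha_\ell - 1}(1-s)^{\sum_{k \neq \ell}\alpha_k - 1}$, i.e. $\mathrm{Beta}\bigl(\alpha_\ell, \sum_{k \neq \ell}\alpha_k\bigr)$, which is exactly the marginal of the Dirichlet prior $\mathcal{D}(\theta;\alpha)$ and hence of $p(\theta_\ell \mid \alpha, \beta_0)$. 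The main obstacle is purely the bookkeeping of the $(1-s)$ exponent: one must correctly include the Jacobian factor and invoke the consistency identity $\sum_{k}y_k = \sum_C\mu(C)$, since these are precisely what make the observation-dependent terms cancel. Without the assumption $\mu(C)=0$ for $C \ni \ell$ this cancellation fails, as $\theta_\ell$ would then remain coupled to the data through the presentation factors.
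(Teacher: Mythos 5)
Your proof is correct and follows essentially the same route as the paper's: both observe that $\theta_\ell$ appears only in its own monomial since every presented subset omits $\ell$, then apply the rescaling $\theta_k = (1-\theta_\ell)u_k$ to reduce the marginal integral to a $\theta_\ell$-independent constant, with the $(1-\theta_\ell)$ powers from the choice counts and the presentation factors cancelling via $\sum_k y_k = \sum_C \mu(C)$. The only cosmetic difference is that the paper cancels the $(1-\theta_\ell)$ factors per presentation using $\sum_j \nu(j,C) = \mu(C)$, whereas you track the aggregate exponent globally; both yield the same $\mathrm{Beta}\bigl(\alpha_\ell, \sum_{k\neq\ell}\alpha_k\bigr)$ marginal.
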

\begin{proof}
Assume, without loss of generality, option $1$ was never presented. 
Since with $\beta = \beta_0$, prior $\theta$ is Dirichlet distributed with parameter $\alpha$, prior marginal $\theta_1|\alpha, \beta_0$ is Beta distributed with parameters $(\alpha_1, \sum_{j=2}^K \alpha_j)$. 
The posterior marginal is
\begin{equation*}
p(\theta_1\mid \alpha, \beta_0, k_{1:T}, C_{1:T}) \propto \int_{T_1} \prod_{j=1}^{K} \theta_{j}^{\alpha_j - 1} \prod_{C\in\mathcal{C}} \frac{\prod_{j=2}^{K}\theta_{j}^{\nu(j,C)}}{(\sum_{i\in C} \theta_i)^{\mu(C)}} d\theta_{2:K-1}
\end{equation*}
where $T_1 = \{(\theta_2, \cdots, \theta_K)\mid \sum_{j=2}^K\theta_j=1-\theta_1, \theta_j>0\}$.

With a change of variables $u_j = \frac{\theta_j}{1-\theta_1}$ for $j \in \{2, \cdots, K\}$, we obtain:
\begin{align*}
p(\theta_1\mid \alpha, \beta_0, k_{1:T}, C_{1:T})
&\propto \theta_1^{\alpha_1 -1} (1-\theta_1)^{\sum_{j=2}^K \alpha_j - 1} \int_{\Delta} \prod_{j=2}^{K} u_{j}^{\alpha_j - 1} \prod_{C\in\mathcal{C}} \frac{\prod_{j=2}^{K}\left((1-\theta_1)u_j\right)^{\nu_(j,C)}}{\left((1-\theta_1)(\sum_{i\in C} u_i)\right)^{\mu(C)}} du\\
&= \theta_1^{\alpha_1 -1} (1-\theta_1)^{\sum_{j=2}^K \alpha_j - 1} \int_{\Delta} \prod_{j=2}^{K} u_{j}^{\alpha_j - 1} \prod_{C\in\mathcal{C}} \frac{\prod_{j=2}^{K}u_j^{\nu(j,C)}}{\left(\sum_{i\in C} u_i\right)^{\mu(C)}} du\\
&\propto \theta_1^{\alpha_1 -1} (1-\theta_1)^{\sum_{j=2}^K \alpha_j - 1}
\end{align*}
Then, the posterior $\theta_1\mid \alpha, k_{1:T}, C_{1:T} \sim \mathcal{B}(\alpha_1, \sum_{j=2}^K \alpha_j)$ is also Beta distributed with parameters $(\alpha_1, \sum_{j=2}^K \alpha_j)$, identically to the prior $p(\theta_1|\alpha, \beta_0)$.
\end{proof}

\subsection{Learning to Present: Illustrations}
In Dirichlet-Luce model, {\it never-presented} options are never penalized, and $K$ may grow arbitrarily. Figure~\ref{fig:supp_ts2} illustrates this, underscoring the contrast with Dirichlet-Multonimal. 
Particularly, we assume an interaction scenario where inferior options are initially presented several times. 
Independence of unexplored alternatives ensures that other, originally superior options will be considered by the system to be presented. As more evidence is collected based on user feedback, they will eventually dominate the presentations.

Randomization is essential in a presentation strategy.
Due to transitivity, posterior estimates for scarcely presented options might still be biased. That is, once presented, the choice probability of an option is no longer independent of other choices made.

We can illustrate with an example, that Thompson sampling fixes the potential bias towards a scarcely presented option. 
Let us assume there are $K=5$ options, and $\theta^*_i>\theta^*_j$ whenever $i<j$. 
Further, we assume an interaction scenario that the user was presented $\{1,5\}$ ten times and $1$ was always preferred, and then the option $5$ was preferred to $2$.
The dependencies due to transitivity would penalize the posterior choice probability of the option $2$ as the options other than $2$ are preferred to $5$ or the option $1$ is preferred to the others. 
This bias introduced from the transitivity is remedied by randomization by Thompson sampling, as illustrated in Figure~\ref{fig:supp_ts1}.

\begin{figure*}[h!]
\centering
\begin{subfigure}{0.9\textwidth}
    \includegraphics[width=\linewidth]{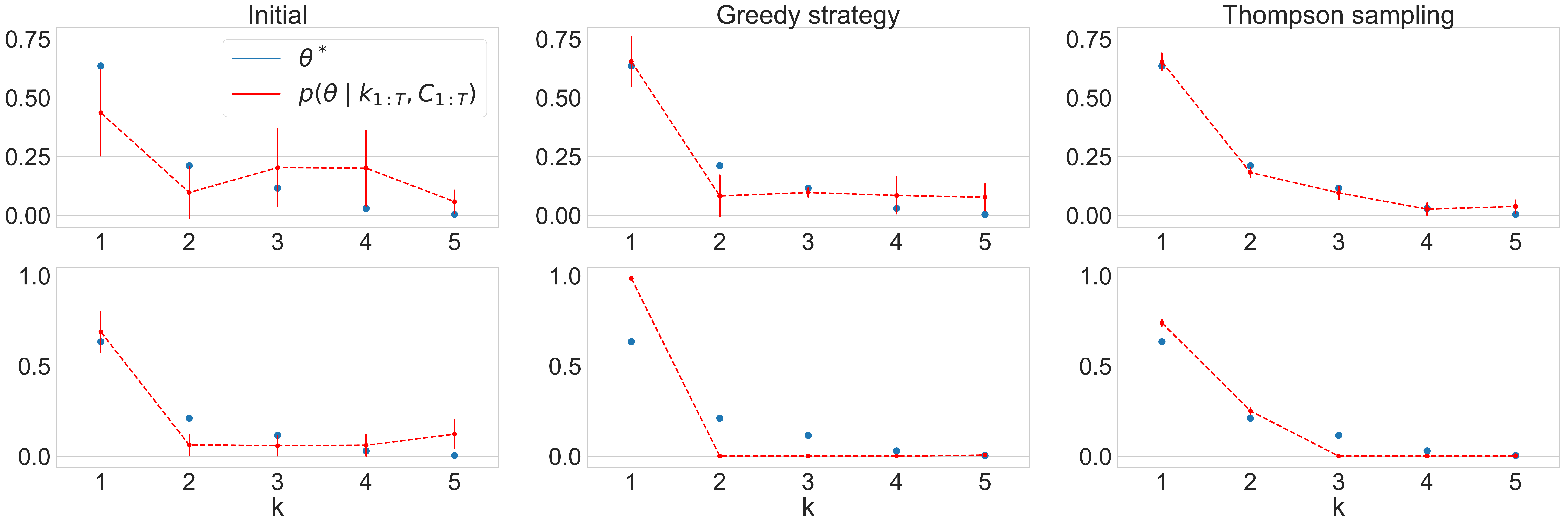}
    \caption{When an inferior option 5 was preferred to option $2$, Thompson sampling fixes the potential bias caused by transitivity.}
    \label{fig:supp_ts1}
\end{subfigure}
\begin{subfigure}{0.9\textwidth}
    \includegraphics[width=\linewidth]{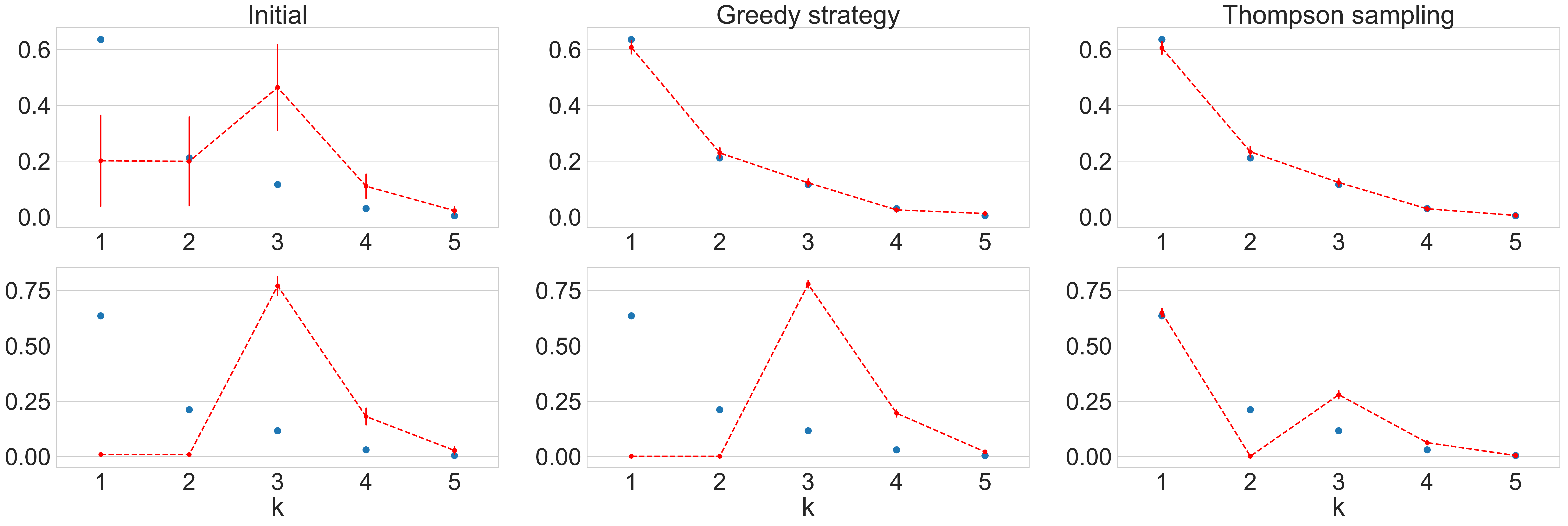}
    \caption{Dirichlet-Luce model (first row) is resilient to initial conditions such as when three inferior options (here the options $3$, $4$, and $5$, and feedback is simulated) were initially promoted (presented 100 times), where Dirichlet-Multinomial (second row) fails.}
    \label{fig:supp_ts2}
\end{subfigure}
\caption{Posterior distribution of preferences under Dirichlet-Luce and Dirichlet-Multinomial models with greedy or Thompson sampling based presentations in two scenarios.}
\label{fig:supp_ts}
\end{figure*}
\subsection{Sequential Sampling Procedure}
We detail the sequential sampling procedure that was described in Section~\ref{sec:presentation}. The procedure is designed for interactive environments, following \citet{chopin2002}.

We start with a flat prior on the choice probabilities of options. Therefore initial particles, indexed by $i$, are drawn from $\theta^{(i)} \sim \mathcal{D}(\mathbf{1})$ and assigned unit weights. At time $t$, we update particle weights $w^{(i)}$ based on choice $k_t$ restricted to presentation $C_t$.

\begin{equation*}
    w_t^{(i)}
        = w_{t-1}^{(i)} \dfrac
                { p_{t}(\theta^{(i)}) }
                { p_{t-1}(\theta^{(i)}) }
        = w_{t-1}^{(i)} \dfrac
                { \theta^{(i)}_{k_t} }
                { \sum_{\kappa \in C_t} \theta^{(i)}_{\kappa} }
\end{equation*}

Here, $p_t(\theta^{(i)})$ denotes the posterior density at time $t$ evaluated at the point $\theta^{(i)}$, short for $p_t(\theta^{(i)}\mid C_{1:t}, k_{1:t}, \alpha, \beta)$. We also keep track of the effective sample size, ESS, defined as follows:

\begin{equation*}
    \text{ESS}_{t} = \frac { (\sum_i w_t^{(i)})^2 } { \sum_i w_t^{(i)} }
\end{equation*}

Whenever $\text{ESS}_t$ drops below a certain threshold, we perform {\it multinomial resampling} followed by a {\it move} step with a {\it Metropolis-within-Gibbs} transition kernel which targets the posterior at $t$, $p_t(\theta)$. In order to apply any Gibbs sampling \citep{gelfand1990} based method, we need the full-conditional densities, $f_j(\theta_j \mid \theta_{-j})$, $1 \leq j \leq K-1$, where $\theta_{-j}$ denotes all coordinates except $j$. In general, full conditional densities do not have a standard form. Therefore we can't sample from $f_j$ directly. There are various alternatives for sampling from an unnormalized univariate density, e.g slice sampling \citep{neal2003}, ARMS \citep{gilks1995}. All of these methods require multiple evaluations of $f_j$, which is prohibitive in our case since evaluation of $f_j$ at any given point has a considerably high computational cost. We use a simpler but effective Metropolis scheme which requires only two evaluations of $f_j$ for each coordinate, as follows:
The coordinates except $j$ constrain the $j$'th coordinate to an interval, $\mathcal{I} = \left( 0, 1 - \sum_{j^{\prime} \neq j} \theta_{j^{\prime}} \right)$. We sample $\hat{\theta}_j \sim \mathcal{U}(\mathcal{I})$, and accept it with probability $\text{min}\{1, f_j(\hat{\theta}_j) / f_j(\theta_j) \}$. 

The complete presentation mechanism along with the sequential sampling routine is listed in Algorithm \ref{alg:sequential}. 

\begin{algorithm}[h!]
    \caption{Sequential Presentation Mechanism}
    \begin{algorithmic}\label{alg:sequential}
        \STATE {\bfseries Input:}
        \STATE $T:$ Number of interactions
        \STATE $N:$ Number of particles
        \STATE Initialize $\alpha$, and set $\beta\gets\beta_0$
        \STATE $\mu(C)\gets0$ for all $C\in\mathcal{C}$
        \STATE $y_k \gets 0$ for all $k\in [K]$
        \STATE $\theta^{(i)} \sim \mathcal{D}(\mathbf{1}), \quad i=1,2,\dots,N$
        \STATE $w_0^{(i)} = 1, \quad i=1,2,\dots,N$
        \FOR{$t=1$ {\bfseries to} $T$}
        \STATE $\theta^{(n)} \gets \theta^{(i)}$ with probability $w_t^{(i)} / \sum_{i^{\prime}} w_t^{(i^{\prime})}$
        \STATE Form $C_t$ with top $L$ elements of $\theta^{(n)}$ 
        \STATE Get preference feedback $k_t$ to $C_t$
        \STATE {\it // Update sufficient statistics}
        \STATE $\mu (C_t) \gets \mu (C_t) + 1$
        \STATE $y_{k_t} \gets y_{k_t} + 1$
        \STATE $w_t^{(i)} = w_{t-1}^{(i)}
                \dfrac { \theta^{(i)}_{k_t} }
                { \sum_{\kappa \in C_t} \theta^{(i)}_{\kappa} }$
        \STATE $\text{ESS}_{t} = (\sum_i w_t^{(i)})^2 / \sum_i w_t^{(i)}$
        \IF{$ESS_t < 0.5 * N$}
        \STATE {\it // Perform multinomial resampling}
        \FORALL{$j \in \{ 1,2,\dots,N \} $}
        \STATE $\theta^{(j)} \gets \theta^{(i)}$ with probability $w_t^{(i)} / \sum_{i^{\prime}} w_t^{(i^\prime)} $
        \ENDFOR
        \STATE {\it // Move particles}
        \FORALL{$i \in \{ 1,2,\dots,N \} $}
        \FORALL{$j \in \{ 1,2,\dots,K-1 \} $}
        \STATE $r = 1 - \sum_{j^{\prime} \neq j} \theta_{j^{\prime}}$
        \STATE $\hat{\theta}^{(i)}_j \sim \mathcal{U}(0, r)$
        \STATE $\lambda = \text{min}\{1, f_j(\hat{\theta}^{(i)}_j) / f_j(\theta^{(i)}_j) \}$
        \STATE $u \sim \mathcal{U}(0,1)$
        \IF{$u<\lambda$}
        \STATE $\theta^{(i)}_j = \hat{\theta}^{(i)}_j$
        \COMMENT{Accept}
        \ENDIF
        \ENDFOR
        \ENDFOR
        \ENDIF
        \ENDFOR
    \end{algorithmic}
\end{algorithm}

\subsection{Simulation Details}

\subsubsection{Dueling Bandits Simulations}
As we stated in Section~\ref{sec:simulations}, we assumed two kinds of feedback scenarios in the pairwise preferences case. 
In the first scenario, we assume that preferences are transitives, and a {\it Plackett-Luce} feedback model. 
We said that the simulations were set up based on a sparse, and then a dense $\theta^*$. 
Figure~\ref{fig:suppts} show these 50-dimensional vectors.

In the cyclic preferences scenario, we used the following pairwise preferences matrix \citep[see][Table 1-c]{rmed}:
\[
\begin{bmatrix}
    0.5 & 0.6 & 0.6 & 0.6 \\
    0.4 & 0.5 & 0.9 & 0.1 \\
    0.4 & 0.1 & 0.5 & 0.9 \\
    0.4 & 0.9 & 0.1 & 0.5 \\
\end{bmatrix}
\]
Say, $i\succ j$ means  that $p(i\mid\{i,j\}) > 0.5$. Option $1$ is the {\it Condorcet} winner, since $1 \succ j$ for all $j \neq 1$. Otherwise preferences are cyclic, as $2\succ 3$, $3 \succ 4$, but $4 \succ 2$.

\subsubsection{Larger Presentations}
For presentations including $L\geq2$ options, we simulated the interactions to demonstrate the regret at top-2 positions, and also reported the number of unique presentations made by the system over the course of 10000 interactions. 
For simulations we used a 100-dimensional sparse $\theta^*$, shown in Figure~\ref{fig:suppbs}

Finally for online learning to rank experiments with presentation size $L=5$, we used the $\theta^*$ in Figure~\ref{fig:suppranking}
\begin{figure*}[h]
	\begin{subfigure}{\textwidth}
		\includegraphics[width=\linewidth]{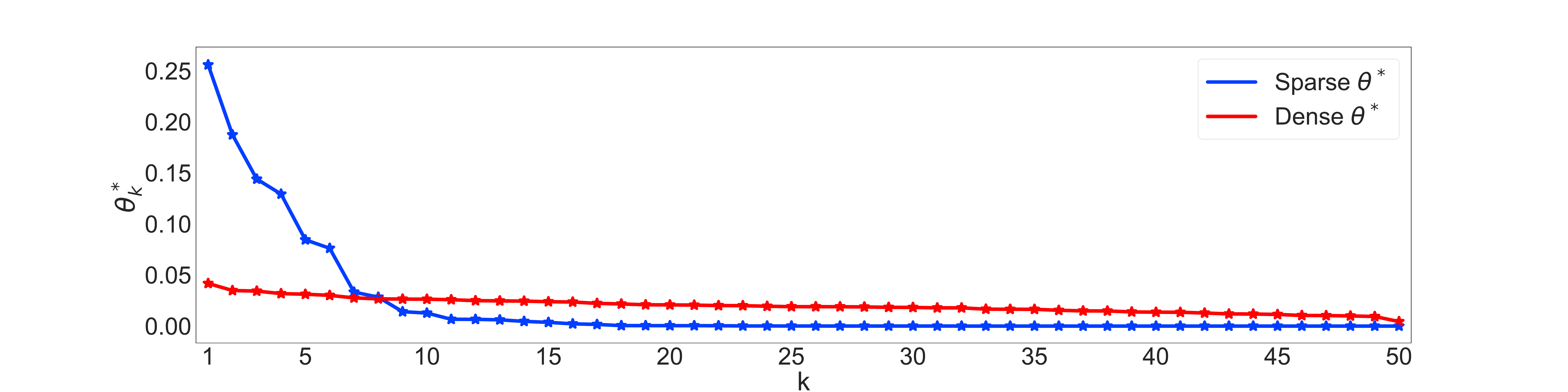}
		\caption{$\theta^*$ in Dueling Bandits Experiments}
		\label{fig:suppts}
	\end{subfigure}
	\begin{subfigure}{\textwidth}
		\includegraphics[width=\linewidth]{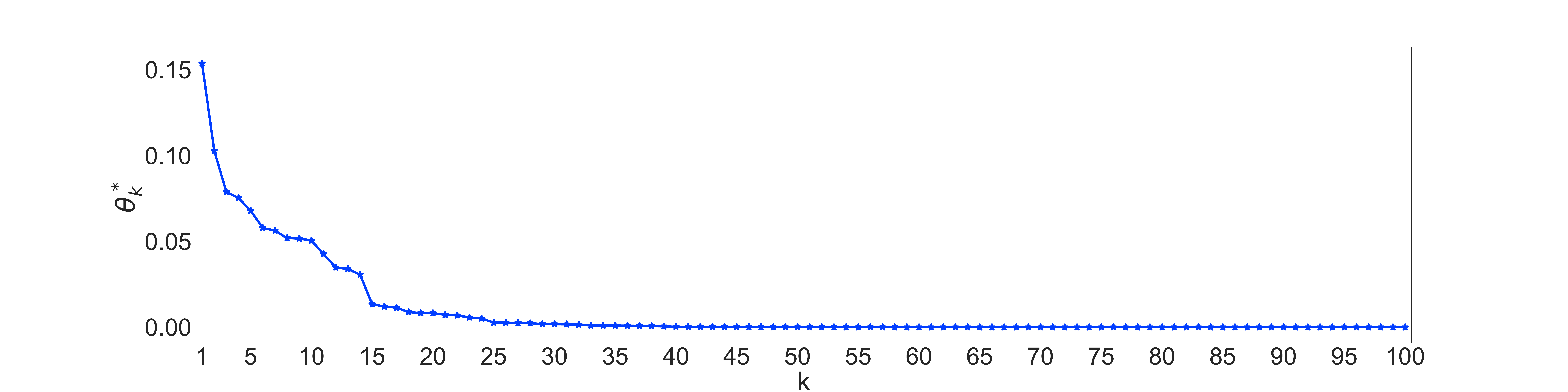}
		\caption{$\theta^*$ (100-dimensional) in Top-2 Performance Experiments}
		\label{fig:suppbs}
	\end{subfigure}
		\begin{subfigure}{\textwidth}
		\includegraphics[width=\linewidth]{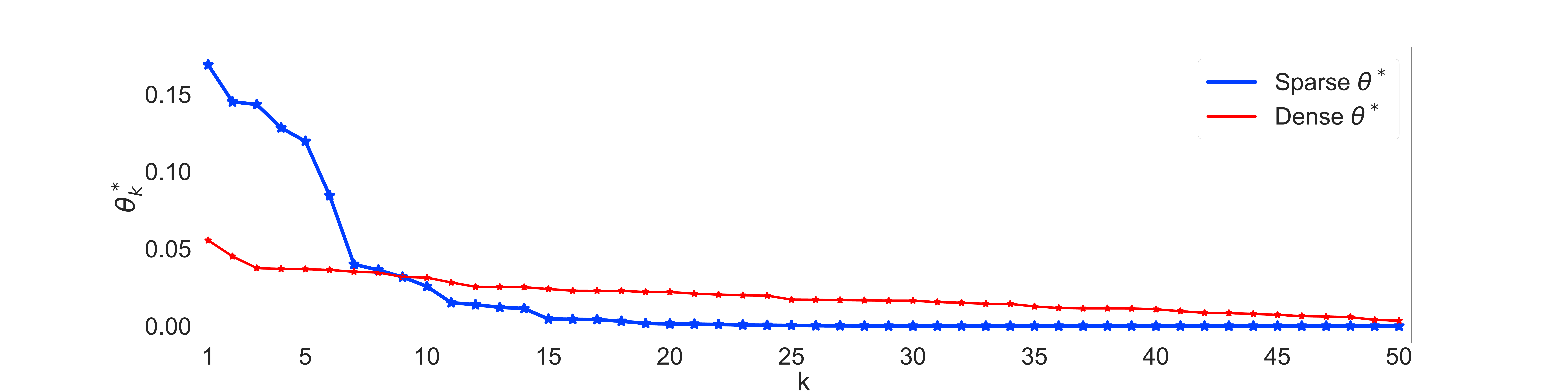}
		\caption{$\theta^*$ in Online Learning to Rank Experiments}
		\label{fig:suppranking}
	\end{subfigure}

	\caption{Simulated $\theta^*$'s, sorted in descending order for visualization}
\end{figure*}



\end{document}